\title{Conformal Prediction under Lévy--Prokhorov Distribution Shifts: Robustness to Local and Global Perturbations}
\author[$1$]{Liviu Aolaritei}
\author[$2$]{Zheyu Oliver Wang}
\author[$2$]{Julie Zhu}
\author[$1,3$]{Michael I.~Jordan}
\author[$2$]{Youssef Marzouk}
\affil[$1$]{Department of Electrical Engineering and Computer Sciences, UC Berkeley, USA \protect\\ \texttt{liviu.aolaritei@berkeley.edu, jordan@cs.berkeley.edu}}
\affil[$2$]{Laboratory for Information and Decision Systems, MIT, USA \protect\\
\texttt{\{olivrw, ymarz, qianyu\_z\}@mit.edu}}
\affil[3]{Sierra team, Inria Paris, France}
\date{}
\begin{document}
\maketitle


\begin{abstract}
Conformal prediction provides a powerful framework for constructing prediction intervals with finite-sample guarantees, yet its robustness under distribution shifts remains a significant challenge. This paper addresses this limitation by modeling distribution shifts using Lévy--Prokhorov (LP) ambiguity sets, which capture both local and global perturbations. We provide a self-contained overview of LP ambiguity sets and their connections to popular metrics such as Wasserstein and Total Variation. We show that the link between conformal prediction and LP ambiguity sets is a natural one: by propagating the LP ambiguity set through the scoring function, we reduce complex high-dimensional distribution shifts to manageable one-dimensional distribution shifts, enabling exact quantification of worst-case quantiles and coverage. Building on this analysis, we construct robust conformal prediction intervals that remain valid under distribution shifts, explicitly linking LP parameters to interval width and confidence levels. Experimental results on real-world datasets demonstrate the effectiveness of the proposed approach.
\end{abstract}


\section{Introduction}
\label{sec:introduction}

\begingroup
\renewcommand\thefootnote{\fnsymbol{footnote}}
\footnotetext[1]{The first three authors contributed equally. Their order is alphabetical.}
\endgroup

Conformal prediction has emerged as a versatile framework for constructing prediction intervals with finite-sample coverage guarantees \cite{papadopoulos2002inductive, Vovk_2005, angelopoulos2024theoretical}. By leveraging the concept of nonconformity, it provides valid confidence sets for predictions, regardless of the underlying data distribution. This framework has gained significant traction in fields such as medicine \cite{lu2022fair, vazquez2022conformal}, bioinformatics \cite{fannjiang2022conformal}, finance \cite{wisniewski2020application}, and autonomous systems \cite{lindemann2023safe, lindemann2023conformal}, where decision-making under uncertainty is critical. However, the standard conformal prediction framework relies on the assumption of exchangeability between training and test data \cite{Barber_2023}. When this assumption is violated due to distribution shifts, the coverage guarantees of conformal prediction may break down, limiting its applicability in real-world scenarios \cite{tibshirani2019conformal}.

Distribution shifts—systematic changes between the training and test distributions—are ubiquitous in practice. Examples include covariate shift in medical diagnostics, where the population characteristics evolve over time \cite{rahmani2023assessing}, or adversarial perturbations in image classification, where small, targeted changes to inputs can drastically alter predictions \cite{moosavi2017universal}. Addressing such shifts is essential for ensuring the reliability of predictive models, particularly in high-stakes applications.

Many extensions of conformal prediction to distribution shift rely on restrictive assumptions—such as covariate or label shift~\cite{tibshirani2019conformal, Podkopaev_2021}, local $\ell_2$ perturbations or global contamination~\cite{Gendler_2022, Clarkson_2024}, or $f$-divergence-based models~\cite{cauchois2024robust}—or require access to likelihood ratios between training and test distributions~\cite{tibshirani2019conformal, Podkopaev_2021}. While effective in certain settings, these approaches can struggle with more complex shifts that involve both local perturbations (e.g., small, pixel-level changes in images) and global perturbations (e.g., population-wide shifts in feature distributions) \cite{Beery_2020}. To bridge this gap, we propose a novel framework based on Lévy–Prokhorov (LP) ambiguity sets, a class of optimal transport-based discrepancy measures that simultaneously capture local and global perturbations.

LP ambiguity sets offer a flexible and interpretable way to model distributional uncertainty. Unlike $f$-divergences, which are limited to absolutely continuous shifts, LP metrics naturally handle broader scenarios, including discrete and transport-based perturbations \cite{Bennouna_2023}. For example, LP metrics can capture local shifts such as minor variations in image textures or sensor readings, as well as global shifts like changes in population demographics. This dual capability makes LP metrics particularly suited for robust prediction in dynamic and heterogeneous environments.

In this paper, we leverage the LP ambiguity set to develop a distributionally robust extension of conformal prediction. By propagating LP ambiguity sets through the scoring function, we simplify high-dimensional shifts into one-dimensional shifts in the score space, enabling exact quantification of worst-case quantiles and coverage. This approach leads to interpretable and robust prediction intervals, with explicit control over how the local and global LP parameters influence interval width and confidence levels. Specifically, the main contributions of this paper can be summarized as follows.
\begin{enumerate}
    \item[1.] {\bf Distribution shifts as LP ambiguity sets.} We provide a brief, yet self-contained, overview of Lévy-Prokhorov (LP) ambiguity sets, highlighting their connection to well-known ambiguity sets derived from Wasserstein and Total Variation (TV) distances.
    \item[2.] {\bf Propagation of LP ambiguity sets.} We show that the propagation through the scoring function $s$ of the LP ambiguity set around the training distribution $\mathbb P$ is captured by another LP ambiguity set around the scoring distribution $s_\# \mathbb P$ (the pushforward of $\mathbb P$ via the map $s$). This naturally translates the LP distribution shift in the input-label pair $(X,Y)$ into a much simpler one-dimensional LP distribution shift in $s(X,Y)$.
    \item[3.] {\bf Distributionally robust conformal prediction.} The propagated LP ambiguity set around $s_\# \mathbb{P}$ enables precise quantification of both the \emph{worst-case quantile} and \emph{worst-case coverage}. These quantities are essential for constructing valid conformal prediction intervals under LP distribution shifts. Our approach is interpretable, elucidating how the local and global parameters of the LP ambiguity set influence the width of the interval and the confidence level.
\end{enumerate}

Finally, we validate the proposed approach on three benchmark datasets: MNIST~\cite{Lecun_1998}, ImageNet~\cite{Li_2009}, and iWildCam~\cite{Beery_2020}, the latter of which captures real-world distribution shifts, demonstrating its empirical coverage guarantees and efficiency in terms of prediction set size.

\subsection{Related Work}
\label{subsec:related:work}

Under train-test distribution shifts that violate exchangeability, conformal prediction often fails to maintain valid coverage guarantees~\cite{tibshirani2019conformal}. Extensions to conformal prediction under such shifts can be summarized into three main categories: sample reweighting, ambiguity sets, and sequential learning.

\medskip

\noindent\emph{Sample Reweighting.} This approach assigns weights to calibration samples based on their relevance to the test data. For instance, \cite{tibshirani2019conformal} proposed weighted conformal prediction for covariate shift, where the marginal distribution $\mathbb P_X$ changes while the conditional distribution $\mathbb P_{Y|X}$ remains fixed. Likelihood ratios are used to adjust for compositional differences, enabling valid predictions. Subsequent extensions address label shift~\cite{Podkopaev_2021}, causal inference~\cite{Lei_2021}, and survival analysis~\cite{Candes_2023, Gui_2022}. However, these methods rely on the accurate estimation of likelihood ratios, which may be challenging in practice. For spatial data, \cite{Mao_2022} proposed weighting samples based on proximity to test points. Still within the covariate shift setting, \cite{qiu2023prediction} and \cite{yang2024doubly} leverage semiparametric theory to design more efficient conformal methods with asymptotic conditional coverage, bypassing the need for explicit sample reweighting. Compared to these approaches, our method handles distribution shifts in the \emph{joint} distribution $\mathbb P$ of $(X,Y)$, without requiring likelihood ratios, and remains effective under more complex local and global perturbations.

\medskip

\noindent\emph{Ambiguity Sets.} Ambiguity sets provide a flexible framework for modeling uncertainty in the data distribution. For instance, \cite{cauchois2024robust} used an $f$-divergence ambiguity set around the training distribution to derive worst-case coverage guarantees and adjusted prediction sets. This work is most closely related to ours, and while their analysis inspired our approach, we rely on fundamentally different tools, particularly drawing on optimal transport techniques. A key limitation of $f$-divergences is that they are restricted to distribution shifts that are absolutely continuous with respect to the training distribution. Building on this line of work, \cite{ai2024not} proposed a robust conformal inference framework that explicitly separates covariate and conditional shifts: the former is handled via sample reweighting without constraints, while the latter is modeled using an $f$-divergence ball. This decomposition enables distinct handling of covariate and conditional shifts, improving efficiency compared to worst-case joint modeling. A related approach is Wasserstein-Regularized Conformal Prediction (WR-CP)~\cite{xu2025wassersteinregularized}, which heuristically minimizes an empirical upper bound on the coverage gap under joint distribution shift by combining importance weighting with Wasserstein distance regularization in score space. However, WR-CP requires kernel density estimation and repeated Wasserstein computations during training, and does not offer formal coverage guarantees under worst-case shifts. Differently, \cite{Gendler_2022} proposed robust score functions based on randomized smoothing~\cite{Cohen_2019, Kumar_2020}, which ensure valid predictions under adversarial perturbations within $\ell_2$-norm balls. While adversarial methods tend to produce overly conservative uncertainty sets, recent works~\cite{Yan_2024, Ghosh_2023, Clarkson_2024} have refined prediction sets by considering specific perturbation structures. Other extensions have incorporated poisoning attacks and non-continuous data types such as graphs~\cite{Zargarbashi_2024}. However, these methods often assume very specific types of distribution shifts or require solving complex optimization problems. In a related spirit, both \cite{yin2024conformal} and \cite{jin2023sensitivity} study worst-case coverage under unmeasured confounding, modeled via the $\Gamma$-selection framework. While their focus is on causal inference and the distributional shifts induced by hidden confounders, their robustness guarantees parallel our LP-based approach in targeting worst-case coverage over a structured class of perturbations. In contrast, our method employs a unified discrepancy measure that captures both local and global perturbations, imposes no assumptions on the score distribution, and provides a computationally efficient way to construct prediction sets.

\medskip

\noindent\emph{Sequential Learning.} While most methods assume i.i.d.\ or exchangeable training data, several works have explored sequential conformal prediction. These methods include updating nonconformity scores~\cite{Xu_2021}, leveraging correlation structures~\cite{chernozhukov_2018}, reweighting samples~\cite{Xu_2023, Barber_2023}, and monitoring rolling coverage~\cite{Gibbs_2021, Gibbs_2022, Zaffran_2022, Bastani_2022}. Although our method does not address sequential settings, extending it to this context is a promising avenue for future research.


\subsection{Mathematical Notation} 
\label{subsec:math:preliminaries}

We denote by $\mathcal P(\mathcal Z)$ the space of Borel probability distributions on $\mathcal Z:= \mathcal X \times \mathcal Y \subseteq \mathbb R^{d}\times \mathbb R$. Given $\mathbb P \in \mathcal P(\mathcal Z)$, we denote by $Z \sim \mathbb P$ the fact that the random variable $Z$ is distributed according to $\mathbb P$. Projection maps are denoted by $\pi$, and the indicator function of a set $\mathcal A$ is denoted by $\mathds{1}\{\mathcal A\}$. We implicitly assume that all maps $s:\mathcal Z \to \mathbb R$ are Borel. We denote by $s_\# \mathbb P$ the pushforward of $\mathbb P$ via the map $s$, defined as $(s_\# \mathbb P)(\mathcal A) := \mathbb P(s^{-1}(\mathcal A))$, for all Borel sets $\mathcal A \subseteq \mathcal Z$. Throughout the paper, $\|\cdot\|$ denotes an arbitrary norm on $\mathcal Z$. Given $\mathbb P, \mathbb Q \in \mathcal P(\mathcal Z)$, the \emph{$\infty$-Wasserstein distance} is defined as
\begin{align}
\label{eq:W:infty}
    \text{W}_\infty(\mathbb P, \mathbb Q) := \inf\left\{\varepsilon \geq 0: \inf_{\gamma \in \Gamma(\mathbb P, \mathbb Q)} \int_{\mathcal Z \times \mathcal Z} \mathds{1}\{\|z_1 - z_2\| > \varepsilon\} \, \mathrm{d} \gamma(z_1,z_2) \leq 0\right\},
\end{align}
where $\Gamma(\mathbb P, \mathbb Q)$ is the set of all joint probability distributions over $\mathcal Z \times \mathcal Z$, with marginals $\mathbb P$ and $\mathbb Q$, often called transportation plans or couplings \cite{villani2009optimal}. Moreover, the \emph{Total Variation (TV) distance} is defined as
\begin{align}
\label{eq:TV}
    \text{TV}(\mathbb P, \mathbb Q) := \inf_{\gamma \in \Gamma(\mathbb P, \mathbb Q)} \int_{\mathcal Z \times \mathcal Z} \mathds{1}\{\|z_1 - z_2\| > 0\} \mathrm{d} \gamma(z_1,z_2).
\end{align}
At first sight, definition~\eqref{eq:TV} might seem different from the more classical definition $\text{TV}(\mathbb P, \mathbb Q) = \sup\{|\mathbb P(\mathcal A) - \mathbb Q(\mathcal A)|: \mathcal A \subseteq \mathcal Z \text{ is a Borel set}\}$. We refer to \cite[Proposition~2.24]{kuhn2024distributionally} for a proof of their equivalence. Here, we prefer definition~\eqref{eq:TV}, as it demonstrates that the TV distance is a special case of an optimal transport discrepancy, enabling us to leverage the extensive literature on optimal transport \cite{villani2009optimal}. Finally, we denote the $\alpha$-quantile of a distribution $\mathbb P$ by
\begin{align}
\label{eq:Quant}
\text{Quant}(\alpha; \mathbb{P}) := \inf \{s \in \mathbb{R} : \mathbb{P}(S \leq s) \geq \alpha\}.
\end{align}


\subsection{Preliminaries in Conformal Prediction}
\label{subsec:CP:preliminaries}

In what follows, we provide a brief introduction to \emph{split conformal prediction}. Consider a predictive model $f: \mathcal{X} \to \mathcal{Y}$ and a calibration dataset $\mathcal{D} = \{(X_i, Y_i)\}_{i=1}^n \subseteq \mathcal{X} \times \mathcal{Y}$, where the points in $\mathcal{D}$, along with any test sample $(X_{n+1}, Y_{n+1}) \in \mathcal{X} \times \mathcal{Y}$, are assumed to be exchangeable and distributed according to $\mathbb{P}$. Without additional assumptions on the predictive model or the data-generating process, conformal prediction constructs a prediction set $C^{1-\alpha}(X_{n+1})$ that satisfies the finite-sample coverage guarantee:
\begin{align}
\label{eq:basic:CP:guarantee}
    \text{Prob}\left\{Y_{n+1} \in C^{1-\alpha}(X_{n+1})\right\} \geq 1 - \alpha,
\end{align}
where the probability is taken over both the calibration dataset $\mathcal{D}$ and the test point $(X_{n+1}, Y_{n+1})$. 

To achieve this, conformal prediction relies on a scoring function $s: \mathcal{X} \times \mathcal{Y} \to \mathbb{R}$, which quantifies the nonconformity of a label $y \in \mathcal{Y}$ for a given input $x \in \mathcal{X}$. The predictive model $f$ is typically used to define the scoring function $s$, where $f(x)$ represents the model's prediction. In regression, $f(x)$ might return a point estimate of $y$, with $s(x, y)$ defined as the absolute error $|f(x) - y|$. In classification, $f(x)$ might output class probabilities, and $s(x, y)$ could be the negative log-probability of the true label $y$. For each calibration point $(X_i, Y_i) \in \mathcal{D}$, the nonconformity score $s(X_i, Y_i)$ is computed. The scores are then used to estimate the empirical $(1-\alpha)$-quantile, with a \emph{finite-sample correction}:
\begin{align*}
    \widehat{q}_\alpha \coloneqq \text{Quant}\left(\frac{\lceil (1-\alpha)(n+1) \rceil}{n}; s_\#{\widehat{\mathbb{P}}_n}\right),
\end{align*}
where $s_\#{\widehat{\mathbb{P}}_n}$ is the empirical distribution of the calibration scores $\{s(X_i, Y_i)\}_{i=1}^n$. Finally, the prediction set for a new label $Y_{n+1}$ is defined as
\begin{align*}
    C^{1-\alpha}(X_{n+1}) = \left\{y \in \mathcal{Y} : s(X_{n+1}, y) \leq \widehat{q}_\alpha \right\}.
\end{align*}
By construction, the prediction set $C^{1-\alpha}(X_{n+1})$ satisfies the coverage guarantee in \eqref{eq:basic:CP:guarantee}, provided the data is exchangeable. With the conformal prediction framework in place, we now shift our focus to the challenge of distribution shifts. Specifically, we consider scenarios where the test data $(X_{n+1}, Y_{n+1})$ is drawn from a distribution that differs from the distribution $\mathbb P$, with this shift captured by the Lévy–Prokhorov ambiguity set around $\mathbb P$. Such shifts introduce additional complexities in ensuring the robustness of the prediction intervals.


\section{Lévy–Prokhorov Distribution Shifts}
\label{sec:LP:shifts}

We model distribution shifts as an \emph{ambiguity set}, i.e., a ball of probability distributions
\begin{align}
\label{eq:LP:ambiguity:set}
    \mathbb B_{\varepsilon,\rho}(\mathbb P) :=\left\{\mathbb Q\in P(\mathcal{Z}): \text{LP}_\varepsilon(\mathbb P, \mathbb Q)\leq \rho\right\},
\end{align}
around the training distribution $\mathbb P$, constructed using the Lévy-Prokhorov (LP) pseudo-metric
\begin{align}
\label{eq:LP:distance}
    \text{LP}_{\varepsilon}(\mathbb P, \mathbb Q) := \inf_{\gamma\in\Gamma(\mathbb P,\mathbb Q)}\int_{\mathcal Z \times \mathcal Z} \mathds{1}\{\|z_1-z_2\|>\varepsilon\}\mathrm{d}\gamma(z_1,z_2).
\end{align}
Note that the LP pseudo-metric belongs to the general class of optimal transport discrepancies, with the particular choice of transportation cost $c(z_1,z_2) := \mathds{1}\{\|z_1-z_2\|>\varepsilon\}$ \cite{Bennouna_2023}. In this section, we provide a detailed exposition of the LP pseudo-metric and explore its expressivity in modeling significant distribution shifts. The section culminates with Proposition~\ref{prop:propagation:LP}, where we study the propagation of the ambiguity set $\mathbb B_{\varepsilon,\rho}(\mathbb P)$ thorough the scoring function $s$, showing that the LP distribution shift can be directly considered in the one-dimensional nonconformity scores.

To provide more insights into the LP ambiguity set, we begin by presenting an alternative representation that decomposes it in terms of the $\infty$-Wasserstein distance and the TV distance.

\begin{proposition}[Decomposition of the LP ambiguity set]
\label{prop:LP:decomposition}
The LP ambiguity set can be equivalently rewritten as 
\begin{align}
\label{eq:LP:decomposition}
    \mathbb{B}_{\varepsilon,\rho}(\mathbb P) = \bigcup_{\widetilde{\mathbb P}:\, \text{W}_\infty(\mathbb P, \widetilde{\mathbb P})\leq \varepsilon} \left\{\mathbb Q\in P(\mathcal{Z}):\, \text{TV}(\widetilde{\mathbb P},\mathbb Q)\leq \rho\right\}.
\end{align}
\end{proposition}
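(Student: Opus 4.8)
The plan is to establish the two inclusions in \eqref{eq:LP:decomposition} separately, relying throughout on the fact that the infima defining $\text{LP}_\varepsilon$, $\text{W}_\infty$, and $\text{TV}$ are attained. Indeed, since $\mathcal Z\subseteq\mathbb R^{d}\times\mathbb R$ is Polish, every coupling set $\Gamma(\cdot,\cdot)$ is tight, hence weakly compact, and the transportation costs $\mathds{1}\{\|z_1-z_2\|>\varepsilon\}$ and $\mathds{1}\{\|z_1-z_2\|>0\}$ are lower semicontinuous (indicators of open sets), so Prokhorov's theorem and the Portmanteau theorem together furnish an optimal coupling in each case. Concretely, I would record (as a short preliminary lemma, or by citing a standard optimal-transport reference) the equivalences: $\text{LP}_\varepsilon(\mathbb P,\mathbb Q)\le\rho$ iff there is $\gamma\in\Gamma(\mathbb P,\mathbb Q)$ with $\gamma\{\|z_1-z_2\|>\varepsilon\}\le\rho$; $\text{W}_\infty(\mathbb P,\widetilde{\mathbb P})\le\varepsilon$ iff there is $\gamma\in\Gamma(\mathbb P,\widetilde{\mathbb P})$ supported on the closed tube $\{\|z_1-z_2\|\le\varepsilon\}$; and $\text{TV}(\widetilde{\mathbb P},\mathbb Q)\le\rho$ iff there is $\gamma\in\Gamma(\widetilde{\mathbb P},\mathbb Q)$ with $\gamma\{z_1\ne z_2\}\le\rho$.

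For the ``$\subseteq$'' inclusion, fix $\mathbb Q$ with $\text{LP}_\varepsilon(\mathbb P,\mathbb Q)\le\rho$ and let $\gamma$ be an optimal coupling, so $p:=\gamma(A^c)\le\rho$ where $A:=\{(z_1,z_2):\|z_1-z_2\|\le\varepsilon\}$. Decompose $\gamma=\gamma|_A+\gamma|_{A^c}$ and set $\mu_1:=\pi^1_\#(\gamma|_A)$, $\mu_2:=\pi^2_\#(\gamma|_A)$; then $\mathbb P-\mu_1=\pi^1_\#(\gamma|_{A^c})\ge 0$ and $\mathbb Q-\mu_2=\pi^2_\#(\gamma|_{A^c})\ge 0$ both have mass $p$, so $\widetilde{\mathbb P}:=\mu_2+(\mathbb P-\mu_1)$ is a probability measure. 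Then $\gamma|_A+(\mathrm{Id},\mathrm{Id})_\#(\mathbb P-\mu_1)$ is a coupling of $\mathbb P$ and $\widetilde{\mathbb P}$ supported on $\{\|z_1-z_2\|\le\varepsilon\}$, giving $\text{W}_\infty(\mathbb P,\widetilde{\mathbb P})\le\varepsilon$; and $(\mathrm{Id},\mathrm{Id})_\#\mu_2+\frac{1}{p}(\mathbb P-\mu_1)\otimes(\mathbb Q-\mu_2)$ (the second term read as $0$ when $p=0$) is a coupling of $\widetilde{\mathbb P}$ and $\mathbb Q$ charging $\{z_1\ne z_2\}$ by at most $p\le\rho$, giving $\text{TV}(\widetilde{\mathbb P},\mathbb Q)\le\rho$. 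Hence $\mathbb Q$ lies in the right-hand side of \eqref{eq:LP:decomposition}.

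For the ``$\supseteq$'' inclusion, fix $\widetilde{\mathbb P}$ with $\text{W}_\infty(\mathbb P,\widetilde{\mathbb P})\le\varepsilon$ and $\mathbb Q$ with $\text{TV}(\widetilde{\mathbb P},\mathbb Q)\le\rho$, and pick $\gamma_1\in\Gamma(\mathbb P,\widetilde{\mathbb P})$ supported on $\{\|z_1-z_2\|\le\varepsilon\}$ and $\gamma_2\in\Gamma(\widetilde{\mathbb P},\mathbb Q)$ with $\gamma_2\{z_1\ne z_2\}\le\rho$. By the gluing lemma (disintegrate along the shared marginal $\widetilde{\mathbb P}$) there is a measure $\pi$ on $\mathcal Z\times\mathcal Z\times\mathcal Z$ with $(1,2)$-marginal $\gamma_1$ and $(2,3)$-marginal $\gamma_2$; set $\gamma:=\pi^{1,3}_\#\pi\in\Gamma(\mathbb P,\mathbb Q)$. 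Since $\|z_1-z_3\|>\varepsilon$ forces $\|z_1-z_2\|>\varepsilon$ or $z_2\ne z_3$, a union bound yields $\gamma\{\|z_1-z_3\|>\varepsilon\}\le\pi\{\|z_1-z_2\|>\varepsilon\}+\pi\{z_2\ne z_3\}=\gamma_1\{\|z_1-z_2\|>\varepsilon\}+\gamma_2\{z_1\ne z_2\}\le 0+\rho$, so $\text{LP}_\varepsilon(\mathbb P,\mathbb Q)\le\rho$, i.e.\ $\mathbb Q\in\mathbb B_{\varepsilon,\rho}(\mathbb P)$.

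I expect the \emph{main obstacle} to be the attainment statements collected in the first paragraph — in particular, that $\text{W}_\infty(\mathbb P,\widetilde{\mathbb P})\le\varepsilon$ yields a coupling literally supported on the \emph{closed} $\varepsilon$-tube (rather than merely on $\varepsilon'$-tubes for all $\varepsilon'>\varepsilon$), which requires a short weak-compactness argument — together with the careful bookkeeping of the sub-probability measures $\mu_1,\mu_2$ and their complements in the ``$\subseteq$'' direction. The gluing step and the union bounds are then routine. I would isolate the attainment facts in a preliminary lemma so that the proof of the proposition itself reads as the clean two-sided argument above.
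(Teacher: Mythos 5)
Your proposal is correct and follows essentially the same route as the paper's proof: optimal couplings obtained from lower semicontinuity of the indicator costs, the same intermediate measure $\widetilde{\mathbb P}=(\pi_2)_\#\gamma^\star|_{\mathcal A}+(\pi_1)_\#\gamma^\star|_{\mathcal A^c}$ with the identity-plus-tube coupling for the $\text{W}_\infty$ bound and a coupling keeping mass $p\leq\rho$ on the diagonal for the TV bound, and the gluing lemma with a union bound for the reverse inclusion. The only differences are cosmetic (you use the product coupling of the leftover masses where the paper takes an arbitrary coupling of the normalized restrictions, and you make explicit the attainment/closed-tube compactness point that the paper passes over silently).
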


All proofs of the paper are deferred to Appendix~\ref{sec:proofs}. The decomposition in equation~\eqref{eq:LP:decomposition} reveals that each distribution $\mathbb Q \in \mathbb{B}_{\varepsilon,\rho}(\mathbb P)$ can be constructed through a two-step procedure. First, the center distribution $\mathbb P$ undergoes a \emph{local perturbation}, resulting in an intermediate distribution $\widetilde{\mathbb P}$ that lies within a $W_\infty$ distance of at most $\varepsilon$ from $\mathbb P$. This implies that each unit of mass in $\mathbb P$ can be arbitrarily relocated within a radius of $\varepsilon$ in $\mathcal Z$. Secondly, $\widetilde{\mathbb P}$ is subjected to a \emph{global perturbation}, producing the final distribution $\mathbb Q$, which lies within a TV distance of at most $\rho$ from $\widetilde{\mathbb P}$. Specifically, this step entails displacing up to a fraction $\rho$ of $\widetilde{\mathbb P}$'s total mass to any location in the space $\mathcal Z$. This decomposition in~\eqref{eq:LP:decomposition} immediately implies that other well-known distribution shifts can be recovered as extreme cases of the LP ambiguity set $\mathbb B_{\varepsilon,\rho}(\mathbb P)$.

\begin{corollary}[Relationship to other metrics]~
\label{cor:relationship:other:metrics}
\begin{itemize}
    \item[(i)] $\mathbb B_{0,\rho}(\mathbb P)$ recovers the TV ambiguity set $\left\{\mathbb Q\in P(\mathcal{Z}): \text{TV}(\mathbb P, \mathbb Q)\leq \rho\right\}$.
    
    \item[(ii)] $\mathbb B_{\varepsilon,0}(\mathbb P)$ recovers the $\infty$-Wasserstein ambiguity set $\left\{\mathbb Q\in P(\mathcal{Z}): \text{W}_\infty(\mathbb P, \mathbb Q)\leq \rho\right\}$.
\end{itemize}
\end{corollary}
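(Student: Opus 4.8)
The plan is to read both claims directly off the decomposition in Proposition~\ref{prop:LP:decomposition}, which states $\mathbb B_{\varepsilon,\rho}(\mathbb P) = \bigcup_{\widetilde{\mathbb P}:\, \text{W}_\infty(\mathbb P, \widetilde{\mathbb P})\leq \varepsilon} \{\mathbb Q:\, \text{TV}(\widetilde{\mathbb P},\mathbb Q)\leq \rho\}$. For part (i), I set $\varepsilon = 0$ and argue that the indexing set $\{\widetilde{\mathbb P}: \text{W}_\infty(\mathbb P, \widetilde{\mathbb P}) \leq 0\}$ is exactly the singleton $\{\mathbb P\}$. This follows because $\text{W}_\infty(\mathbb P, \widetilde{\mathbb P}) \leq 0$ means, by definition~\eqref{eq:W:infty}, that there is a coupling $\gamma$ with $\int \mathds{1}\{\|z_1 - z_2\| > 0\}\,\mathrm{d}\gamma = 0$, i.e.\ $\gamma$ is supported on the diagonal, which forces $\widetilde{\mathbb P} = \mathbb P$ (the two marginals of a diagonally-supported coupling coincide). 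Hence the union in~\eqref{eq:LP:decomposition} collapses to the single set $\{\mathbb Q: \text{TV}(\mathbb P, \mathbb Q) \leq \rho\}$, which is the claimed TV ambiguity set.

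For part (ii), I set $\rho = 0$ and note that $\{\mathbb Q: \text{TV}(\widetilde{\mathbb P}, \mathbb Q) \leq 0\}$ is the singleton $\{\widetilde{\mathbb P}\}$, by the same diagonal-support argument applied to definition~\eqref{eq:TV}: a coupling achieving $\int \mathds{1}\{\|z_1-z_2\|>0\}\,\mathrm{d}\gamma = 0$ is supported on the diagonal, so $\mathbb Q = \widetilde{\mathbb P}$. (The infimum in~\eqref{eq:TV} is attained because $\mathds{1}\{\|z_1-z_2\|>0\}$ is lower semicontinuous, by \cite[Theorem 4.1]{villani2009optimal}, as already used in the proof of Proposition~\ref{prop:LP:decomposition}.) Therefore~\eqref{eq:LP:decomposition} reduces to $\bigcup_{\widetilde{\mathbb P}: \text{W}_\infty(\mathbb P, \widetilde{\mathbb P}) \leq \varepsilon} \{\widetilde{\mathbb P}\} = \{\widetilde{\mathbb P}: \text{W}_\infty(\mathbb P, \widetilde{\mathbb P}) \leq \varepsilon\}$, which is precisely the $\infty$-Wasserstein ambiguity set $\{\mathbb Q: \text{W}_\infty(\mathbb P, \mathbb Q) \leq \varepsilon\}$ (the statement writes the radius as $\rho$, but this is just a relabeling of the parameter that survives).

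Alternatively, and perhaps more cleanly, one can bypass Proposition~\ref{prop:LP:decomposition} entirely and work straight from the definition~\eqref{eq:LP:distance}: for (i), observe $\text{LP}_0(\mathbb P, \mathbb Q) = \inf_{\gamma} \int \mathds{1}\{\|z_1-z_2\|>0\}\,\mathrm{d}\gamma = \text{TV}(\mathbb P,\mathbb Q)$ by comparing~\eqref{eq:LP:distance} at $\varepsilon=0$ with~\eqref{eq:TV}; for (ii), observe that $\text{LP}_\varepsilon(\mathbb P,\mathbb Q) \leq 0$ is equivalent to $\text{W}_\infty(\mathbb P,\mathbb Q) \leq \varepsilon$ by comparing~\eqref{eq:LP:distance} with~\eqref{eq:W:infty}. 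Since $\text{LP}_\varepsilon \geq 0$ always, the constraint $\text{LP}_\varepsilon(\mathbb P,\mathbb Q) \leq 0$ is the same as $\text{LP}_\varepsilon(\mathbb P,\mathbb Q) = 0$, so $\mathbb B_{\varepsilon,0}(\mathbb P) = \{\mathbb Q: \text{W}_\infty(\mathbb P,\mathbb Q)\leq\varepsilon\}$.

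This corollary is essentially a sanity check, so I do not anticipate a genuine obstacle; the only point requiring a moment's care is the claim that a transport plan with zero cost under $\mathds{1}\{\|z_1-z_2\|>0\}$ must be diagonally supported and hence forces equality of marginals — this needs the infimum to be attained (lower semicontinuity of the cost, already invoked in the paper) so that "cost $= 0$" genuinely means "$\gamma(\{z_1 \neq z_2\}) = 0$" rather than merely an infimizing sequence. I would state this once and reuse it for both parts.
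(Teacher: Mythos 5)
Your proposal is correct and follows essentially the same route as the paper, which likewise proves both parts by setting $\varepsilon=0$ (forcing $\widetilde{\mathbb P}=\mathbb P$) or $\rho=0$ (forcing $\widetilde{\mathbb P}=\mathbb Q$) in the decomposition of Proposition~\ref{prop:LP:decomposition}; you simply spell out the diagonal-support/attainment details that the paper leaves implicit. Your alternative direct comparison of definitions~\eqref{eq:LP:distance}, \eqref{eq:TV}, and~\eqref{eq:W:infty} is also valid (and your note that the radius in part~(ii) should read $\varepsilon$ rather than $\rho$ is a fair catch of a typo in the statement).
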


The decomposition in equation~\eqref{eq:LP:decomposition} can also be expressed in terms of random variables, which may offer a clearer understanding of the distribution shifts represented by the LP ambiguity set. We state this in the following proposition, which recovers \cite[Theorem~2.1]{Bennouna_2023} using a different approach.

\begin{proposition}[Local and Global Perturbation]
\label{prop:local:global}
Let $Z_1 \sim \mathbb P$. Then $\mathbb Q \in \mathbb B_{\varepsilon,\rho}(\mathbb P)$ if and only if there exists a random variable $Z_2 \sim \mathbb Q$ of the form
\begin{align}
\label{eq:local:global}
    Z_2 \overset{d}{=} (Z_1 + N)\mathds{1}\{B = 0\} + C \mathds{1}\{B = 1\},
\end{align}
where the random variables $N, B, C$ are as follows:
\begin{itemize}
    \item $N$ represents the local perturbation, whose distribution is supported on $\{n \in \mathcal Z: \|n\| \leq \varepsilon\}$,

    \item $B$ indicates whether the sample is globally perturbed or not, with $\text{Prob}(B = 1) \leq \rho$, and

    \item $C$ represents the global perturbation, following an arbitrary distribution on $\mathcal Z$.
\end{itemize}
In particular, $Z_1, N, B$, and $C$ can all be correlated.
\end{proposition}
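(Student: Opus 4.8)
The plan is to work directly with the optimal-transport definition \eqref{eq:LP:distance} of $\text{LP}_\varepsilon$: the representation \eqref{eq:local:global} is essentially a way of rewriting an optimal coupling of $\mathbb P$ and $\mathbb Q$, with the ``bad'' set $\{\|z_1-z_2\|>\varepsilon\}$ of the coupling playing the role of $\{B=1\}$. I will prove the two implications separately; no machinery beyond the existence of an optimal coupling is needed.

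\medskip
\noindent\emph{(If).} Suppose $Z_2 \overset{d}{=} W := (Z_1+N)\mathds{1}\{B=0\} + C\,\mathds{1}\{B=1\}$ with $Z_1\sim\mathbb P$ and $N,B,C$ as in the statement, all on one probability space. Since $W$ has law $\mathbb Q$, the joint law of $(Z_1,W)$ is a coupling $\gamma\in\Gamma(\mathbb P,\mathbb Q)$. On $\{B=0\}$ one has $\|Z_1-W\|=\|N\|\le\varepsilon$, so $\mathds{1}\{\|Z_1-W\|>\varepsilon\}\le\mathds{1}\{B=1\}$ pointwise a.s.; integrating gives $\gamma(\|z_1-z_2\|>\varepsilon)\le\text{Prob}(B=1)\le\rho$, hence $\text{LP}_\varepsilon(\mathbb P,\mathbb Q)\le\rho$ and $\mathbb Q\in\mathbb B_{\varepsilon,\rho}(\mathbb P)$.

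\medskip
\noindent\emph{(Only if).} Let $\mathbb Q\in\mathbb B_{\varepsilon,\rho}(\mathbb P)$. Since $(z_1,z_2)\mapsto\mathds{1}\{\|z_1-z_2\|>\varepsilon\}$ is nonnegative and lower semicontinuous, \cite[Theorem 4.1]{villani2009optimal} gives an optimal coupling $\gamma^\star\in\Gamma(\mathbb P,\mathbb Q)$ with $\gamma^\star(\|z_1-z_2\|>\varepsilon)=\text{LP}_\varepsilon(\mathbb P,\mathbb Q)\le\rho$. Taking $(Z_1,Z_2)\sim\gamma^\star$, set $B:=\mathds{1}\{\|Z_1-Z_2\|>\varepsilon\}$, $N:=(Z_2-Z_1)\mathds{1}\{B=0\}$, and $C:=Z_2$. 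Then $Z_1\sim\mathbb P$; $N$ is supported in $\{n:\|n\|\le\varepsilon\}$ because it equals $0$ on $\{B=1\}$ and equals $Z_2-Z_1$ with norm $\le\varepsilon$ on $\{B=0\}$; $\text{Prob}(B=1)=\gamma^\star(\|z_1-z_2\|>\varepsilon)\le\rho$; and $C\sim(\pi_2)_\#\gamma^\star=\mathbb Q$, which is an arbitrary law on $\mathcal Z$. Finally $(Z_1+N)\mathds{1}\{B=0\}+C\,\mathds{1}\{B=1\}=Z_2\mathds{1}\{B=0\}+Z_2\mathds{1}\{B=1\}=Z_2\sim\mathbb Q$, which is precisely \eqref{eq:local:global}; by construction $Z_1,N,B,C$ are defined jointly and may be correlated.

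\medskip
An alternative, equally short route deduces both implications from Proposition~\ref{prop:LP:decomposition}: realize the $\text{W}_\infty$-step $\mathbb P\to\widetilde{\mathbb P}$ as adding noise of norm $\le\varepsilon$ (an optimal $\text{W}_\infty$-coupling is supported on $\{\|z_1-z_2\|\le\varepsilon\}$) and the $\text{TV}$-step $\widetilde{\mathbb P}\to\mathbb Q$ as resampling a mass-fraction $\le\rho$ (an optimal coupling for \eqref{eq:TV} puts mass $\ge 1-\rho$ on the diagonal), then glue the two couplings via the Gluing lemma. Either way, the only points needing care are invoking \cite[Theorem 4.1]{villani2009optimal} so that the displaced mass is \emph{exactly} $\le\rho$ rather than merely $\le\rho+\delta$, and zeroing out $N$ on $\{B=1\}$ so that its support constraint holds identically while the reconstruction formula still returns $Z_2$. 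I anticipate no substantive obstacle.
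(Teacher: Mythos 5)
Your proof is correct, and it takes a more direct route than the paper's in both directions. For the ``only if'' direction the underlying idea is the same (take an optimal coupling $\gamma^\star$, guaranteed by \cite[Theorem 4.1]{villani2009optimal}, and let the bad event $\{\|z_1-z_2\|>\varepsilon\}$ play the role of $\{B=1\}$), but you define $B,N,C$ deterministically as measurable functions of the pair $(Z_1,Z_2)\sim\gamma^\star$, whereas the paper disintegrates $\gamma^\star$ into conditional laws $\gamma^\star_{Z_1}$, draws $B\mid Z_1$ as a Bernoulli with the conditional exceedance probability, and builds $N,C$ from the normalized restrictions of $\gamma^\star_{Z_1}$ to $\mathcal A_{Z_1}$ and $\mathcal A_{Z_1}^c$; your version avoids the disintegration and even yields the reconstruction $Z_2=(Z_1+N)\mathds 1\{B=0\}+C\mathds 1\{B=1\}$ almost surely, not just in distribution. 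For the ``if'' direction the divergence is more substantial: the paper routes the argument through Proposition~\ref{prop:LP:decomposition}, introducing the intermediate variable $Z_3=(Z_1+N)\mathds 1\{B=0\}+Z_1\mathds 1\{B=1\}$ and verifying $\text{W}_\infty(\mathbb P,\widetilde{\mathbb P})\le\varepsilon$ and $\text{TV}(\widetilde{\mathbb P},\mathbb Q)\le\rho$ separately, while you bound $\text{LP}_\varepsilon(\mathbb P,\mathbb Q)$ in one stroke from the coupling induced by $(Z_1,W)$ via the pointwise inequality $\mathds 1\{\|Z_1-W\|>\varepsilon\}\le\mathds 1\{B=1\}$. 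What each buys: the paper's detour re-uses and further illustrates the two-step local/global decomposition (which is also the picture behind the later worst-case constructions), whereas your argument is shorter, self-contained, and needs nothing beyond the existence of an optimal coupling. One cosmetic caveat: since the proposition fixes $Z_1\sim\mathbb P$ in advance, your joint construction $(Z_1,Z_2)\sim\gamma^\star$ implicitly re-realizes $Z_1$ on a (possibly enlarged) probability space; because \eqref{eq:local:global} is only an equality in distribution this is harmless, but it is worth a sentence — the paper sidesteps it by conditioning on the given $Z_1$.
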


Propositions~\ref{prop:LP:decomposition} and \ref{prop:local:global} readily imply that the LP ambiguity set allows for distributions $\mathbb Q$ which are significantly different from $\mathbb P$, as the following remark explains.

\begin{remark}[Absolute continuity]
\label{remark:absolute:continuity}
The decomposition in \eqref{eq:LP:decomposition} implies that $\mathbb{B}_{\varepsilon,\rho}(\mathbb P)$ may contain distributions that are not absolutely continuous with respect to the training distribution $\mathbb{P}$. This generality is particularly valuable in settings where the test distribution assigns mass to regions unobserved during training. Such shifts are excluded under $f$-divergence-based ambiguity sets \cite{cauchois2024robust} or models that enforce bounded likelihood ratios between the test and training distributions \cite{tibshirani2019conformal}.
\end{remark}

So far, we have considered the distribution shift modeled via an LP ambiguity set in the space $\mathcal Z = \mathcal X \times \mathcal Y$. This is in line with supervised learning tasks, where it is more natural to consider distribution shifts in \emph{data-space} $\mathcal X \times \mathcal Y$, as opposed to a distribution shift in the \emph{score-space} $s(\mathcal X, \mathcal Y)$. Nonetheless, from a technical point of view, it is much easier to deal with an LP ambiguity set in the one-dimensional scores, due to its immediate relationship with the cumulative distribution functions and quantiles. The following proposition shows that the result of the propagation of $\mathbb{B}_{\varepsilon,\rho}(\mathbb P)$ through the scoring function $s$ is again captured by a an LP ambiguity set, allowing us to effectively restrict the analysis to a distribution shift on the scores.

\begin{proposition}[Propagation of the LP ambiguity set]
\label{prop:propagation:LP}
Let the scoring function $s:\mathcal Z \to \mathbb R$ be $k$-Lipschitz over $\mathcal Z$, for some $k \in \mathbb R_+$. Then,
\begin{align}
\label{eq:propagation:LP}
    s_\# \mathbb{B}_{\varepsilon,\rho}(\mathbb P) \subseteq \mathbb{B}_{k \varepsilon, \rho}(s_\# \mathbb P).
\end{align}
\end{proposition}

Proposition~\ref{prop:propagation:LP} requires the scoring function $s$ to be Lipschitz continuous over $\mathcal{Z}$. This condition is trivially satisfied if, for instance, $s$ is continuous and $\mathcal{Z}$ is compact. In light of the inclusion~\eqref{eq:propagation:LP}, we focus, for the remainder of the paper, on distribution shifts over the nonconformity scores. These shifts are modeled via an LP ambiguity set $\mathbb{B}_{\varepsilon, \rho}(\mathbb{P})$, where, for simplicity, we omit the Lipschitz constant $k$ from the notation and consider $\mathbb P$ to be directly the distribution of $s(Z)$. Note that, in this case, all distributions inside $\mathbb{B}_{\varepsilon, \rho}(\mathbb{P})$ are supported on $\mathbb R$.

\begin{remark}[Lipschitzness of the score function]
\label{remark:lipschitz}
The Lipschitz assumption in Proposition~\ref{prop:propagation:LP} is not required for any other theoretical results in this paper. It merely illustrates how data-space perturbations translate into score-space perturbations under a smooth scoring function. All subsequent results, including our coverage guarantees under distribution shift, are derived by modeling shift directly over the nonconformity scores. This modeling choice aligns with standard practice in conformal prediction under distribution shift (e.g., \cite{cauchois2024robust}), and enables our framework to accommodate arbitrarily complex, potentially non-Lipschitz score functions such as deep neural networks.
\end{remark}


\section{Worst-Case Quantile and Coverage}
\label{sec:worst:case}

In this section we introduce and analyze the two key quantities which allow us to construct a robust prediction interval with the right coverage level for any test distribution in the LP ambiguity set. The first quantity is the \emph{worst-case quantile}, defined below.

\begin{definition}[Worst-case quantile]
\label{def:Quant:WC}
For $\beta \in [0,1]$, the worst-case $\beta$-quantile in $\mathbb{B}_{\varepsilon, \rho}(\mathbb P)$ is defined as
\begin{align}
\label{eq:Quant:WC}
    \text{Quant}_{\varepsilon,\rho}^{\text{WC}}(\beta;\mathbb P) := \sup_{\mathbb Q \in \mathbb{B}_{\varepsilon,\rho}(\mathbb P)} \text{Quant}(\beta;\mathbb Q).
\end{align}
\end{definition}

Equation~\eqref{eq:Quant:WC} defines the worst-case quantile through a distributionally robust optimization problem, which quantifies the largest $\beta$-quantile for all the test distributions in the LP ambiguity set. In other words, $\text{Quant}_{\varepsilon,\rho}^{\text{WC}}(\beta;\mathbb P)$ represents the worst-case impact of the distribution shift on the value of the $\beta$-quantile. This, in turn, affects the size of the confidence interval, as we will show in Section~\ref{sec:robust:conformal}. The second quantity is the \emph{worst-case coverage}, defined next.

\begin{definition}[Worst-case coverage]
\label{def:Cov:WC}
For $q \in \mathbb R$, the worst-case coverage in $\mathbb{B}_{\varepsilon, \rho}(\mathbb P)$ at $q$ is defined as
\begin{align}
\label{eq:Cov:WC}
    \text{Cov}_{\varepsilon,\rho}^{\text{WC}}(q;\mathbb P) = \inf_{\mathbb Q \in \mathbb{B}_{\varepsilon,\rho}(\mathbb P)} F_{\mathbb Q}(q),
\end{align}
where $F_{\mathbb Q}:\mathbb R \to [0,1]$ is the cumulative distribution function of $\mathbb Q$.
\end{definition}

Equation~\eqref{def:Cov:WC} defines the worst-case coverage as the lowest value among the cumulative distribution functions in the LP ambiguity set evaluated at $q \in \mathbb R$. For example, if $q = \text{Quant}(1-\alpha;\mathbb P)$, $\text{Cov}_{\varepsilon,\rho}^{\text{WC}}(q;\mathbb P)$ represents the worst-case impact of the distribution shift on the true confidence level when the confidence level for $\mathbb P$ is $1-\alpha$. In the remainder of this section, we will show that both $\text{Quant}_{\varepsilon,\rho}^{\text{WC}}(\beta;\mathbb P)$ and $\text{Cov}_{\varepsilon,\rho}^{\text{WC}}(q;\mathbb P)$ can be quantified in closed-form, as a function of the training distribution $\mathbb P$ and the two robustness parameters $\varepsilon,\rho$. Before doing so, we note that a high value of $\rho$, i.e., the global perturbation parameter, renders the worst-case quantile trivial. We show this in the following remark.

\begin{remark}[Case $\rho\geq 1 - \beta$]
\label{remark:trivial:bound}
If $\rho\geq 1 - \beta$, then $\text{Quant}_{\varepsilon,\rho}^{\text{WC}}(\beta;\mathbb P) = \text{Quant}(1;\mathbb P)$. Intuitively, the LP ambiguity set $\mathbb{B}_{\varepsilon, \rho}(\mathbb P)$ allows to displace $\rho$ mass from the distribution $\mathbb P$ and move it arbitrarily in $\mathbb R$. Since $\rho\geq 1 - \beta$, this implies that we can construct a sequence of distributions $\mathbb Q_n \in \mathbb{B}_{\varepsilon, \rho}(\mathbb P)$ for which $\text{Quant}(\beta;\mathbb Q_n) \to \infty$. We exemplify this intuition in the following example. Let $\mathbb P = \mathcal U([0,1])$, and let $\mathbb Q_n := \mathcal U([0,1-\rho]) + \rho \delta_{n}$. Then, clearly $\text{LP}_{\varepsilon}(\mathbb P, \mathbb Q_n) = \rho$, and $\text{Quant}(\beta;\mathbb Q_n) \geq n$.
\end{remark}

Following Remark~\ref{remark:trivial:bound}, in the rest of the paper, we restrict our attention to the case $\rho < 1 - \beta$ in the quantity $\text{Quant}_{\varepsilon,\rho}^{\text{WC}}(\beta;\mathbb P)$. We are now prepared to present the first result of this section.

\begin{proposition}[Worst-case quantile in the LP ambiguity set]
\label{prop:WC:Quant}
The following holds
\begin{align}
\label{eq:WC:Quant}
    \text{Quant}_{\varepsilon,\rho}^{\text{WC}}(\beta;\mathbb P) = \text{Quant}(\beta+\rho; \mathbb P) + \varepsilon.
\end{align}
\end{proposition}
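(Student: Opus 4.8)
The plan is to prove~\eqref{eq:WC:Quant} by establishing two inequalities, using the decomposition of the LP ambiguity set from Proposition~\ref{prop:LP:decomposition}. Since $\mathbb{B}_{\varepsilon,\rho}(\mathbb P) = \bigcup_{\widetilde{\mathbb P}:\, \text{W}_\infty(\mathbb P, \widetilde{\mathbb P})\leq \varepsilon} \{\mathbb Q: \text{TV}(\widetilde{\mathbb P},\mathbb Q)\leq \rho\}$, the supremum in~\eqref{eq:Quant:WC} factors as $\sup_{\widetilde{\mathbb P}:\, \text{W}_\infty(\mathbb P,\widetilde{\mathbb P})\leq\varepsilon}\ \sup_{\mathbb Q:\,\text{TV}(\widetilde{\mathbb P},\mathbb Q)\leq\rho}\text{Quant}(\beta;\mathbb Q)$. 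So I would first handle the inner TV-supremum, then the outer $W_\infty$-supremum, and combine.

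\emph{Inner step (TV ball).} For a fixed one-dimensional distribution $\widetilde{\mathbb P}$ supported on $\mathbb R$, I claim $\sup_{\mathbb Q:\,\text{TV}(\widetilde{\mathbb P},\mathbb Q)\leq\rho}\text{Quant}(\beta;\mathbb Q) = \text{Quant}(\beta+\rho;\widetilde{\mathbb P})$ (valid since $\beta+\rho<1$). The inequality ``$\leq$'' follows because $\text{TV}(\widetilde{\mathbb P},\mathbb Q)\leq\rho$ implies $F_{\mathbb Q}(q)\geq F_{\widetilde{\mathbb P}}(q)-\rho$ for all $q$ (a TV-$\rho$ perturbation can decrease any CDF value by at most $\rho$), hence $\text{Quant}(\beta;\mathbb Q)\leq \text{Quant}(\beta+\rho;\widetilde{\mathbb P})$ by monotonicity of the quantile map and the definition~\eqref{eq:Quant}. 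For ``$\geq$'', I exhibit a near-optimal $\mathbb Q$: take the mass of $\widetilde{\mathbb P}$ lying below $\text{Quant}(\beta+\rho;\widetilde{\mathbb P})$ of total weight just under $\beta+\rho$, remove a chunk of weight $\rho$ from it (e.g. from just below the quantile, or shove it off to $+\infty$), so that the resulting $\mathbb Q$ has less than $\beta$ mass at or below any point strictly less than $\text{Quant}(\beta+\rho;\widetilde{\mathbb P})$; this $\mathbb Q$ is within TV distance $\rho$ and its $\beta$-quantile equals (or approaches) $\text{Quant}(\beta+\rho;\widetilde{\mathbb P})$. Care is needed with atoms at the quantile and with the $\inf$ in~\eqref{eq:Quant}, so the cleanest route is to argue the value is attained or approached by an explicit sequence.

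\emph{Outer step ($W_\infty$ ball).} It remains to show $\sup_{\widetilde{\mathbb P}:\,\text{W}_\infty(\mathbb P,\widetilde{\mathbb P})\leq\varepsilon}\text{Quant}(\beta+\rho;\widetilde{\mathbb P}) = \text{Quant}(\beta+\rho;\mathbb P)+\varepsilon$. For ``$\leq$'': if $\text{W}_\infty(\mathbb P,\widetilde{\mathbb P})\leq\varepsilon$ then by~\eqref{eq:W:infty} there is a coupling moving every point by at most $\varepsilon$, so $F_{\widetilde{\mathbb P}}(q)\geq F_{\mathbb P}(q-\varepsilon)$ for all $q$, giving $\text{Quant}(\beta+\rho;\widetilde{\mathbb P})\leq \text{Quant}(\beta+\rho;\mathbb P)+\varepsilon$. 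For ``$\geq$'': take $\widetilde{\mathbb P}$ to be the pushforward of $\mathbb P$ under the map $x\mapsto x+\varepsilon$ (a deterministic shift by $+\varepsilon$), which clearly satisfies $\text{W}_\infty(\mathbb P,\widetilde{\mathbb P})\leq\varepsilon$ and has $\text{Quant}(\beta+\rho;\widetilde{\mathbb P}) = \text{Quant}(\beta+\rho;\mathbb P)+\varepsilon$. Chaining the two steps yields~\eqref{eq:WC:Quant}.

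The main obstacle I anticipate is the lower bound in the inner TV step: making the removal of $\rho$ mass rigorous at the level of quantiles when $\widetilde{\mathbb P}$ has an atom exactly at $\text{Quant}(\beta+\rho;\widetilde{\mathbb P})$, and dealing with the left-continuity/infimum subtleties in the definition of $\text{Quant}$. One clean workaround is to first prove the identity for $\mathbb P$ with continuous, strictly increasing CDF and then pass to the general case by an approximation/limiting argument, or alternatively to avoid the decomposition entirely and work directly with~\eqref{eq:LP:distance}: any $\mathbb Q$ with $\text{LP}_\varepsilon(\mathbb P,\mathbb Q)\leq\rho$ admits a coupling $\gamma$ under which $\{|z_1-z_2|>\varepsilon\}$ has probability $\leq\rho$, which directly yields $F_{\mathbb Q}(q)\geq F_{\mathbb P}(q-\varepsilon)-\rho$, proving ``$\leq$'' in one shot; the matching construction for ``$\geq$'' then combines the $+\varepsilon$ shift with the $\rho$-mass removal in a single step.
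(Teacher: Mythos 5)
Your proposal is correct, but it is organized differently from the paper's proof. The paper does not route the argument through Proposition~\ref{prop:LP:decomposition}: its Step~1 is a contradiction argument that works directly with the coupling definition \eqref{eq:LP:distance}, bounding $F_{\mathbb P}(a)-F_{\widetilde{\mathbb Q}}(b)$ with $a=\text{Quant}(\beta+\rho;\mathbb P)$, $b=a+\varepsilon$ by the LP transport cost, and its Step~2 writes down an explicit sequence of CDFs $F_{\mathbb Q_n}$ (translate $\mathbb P$ by $\varepsilon$, then collapse the mass between the levels $\beta-\tfrac1n$ and $\beta-\tfrac1n+\rho$ onto a single point) and passes to the limit using left-continuity of the quantile in its level. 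Your route instead factors the supremum over the union in \eqref{eq:LP:decomposition} into an inner TV-ball problem and an outer $\text{W}_\infty$-ball problem, each of which is an elementary one-dimensional CDF computation; this buys modularity and also makes the connection to Proposition~\ref{prop:WC:Cov} transparent, since your combined bound $F_{\mathbb Q}(q)\geq F_{\mathbb P}(q-\varepsilon)-\rho$ (which you also note can be obtained in one shot from a near-optimal coupling, essentially the paper's Step~1 in contrapositive form) is exactly the worst-case coverage statement. What the paper's organization buys is precisely the point you flag as delicate: its explicit $\mathbb Q_n$ resolves the atom/attainment issue in your inner lower bound. To make that step airtight in your scheme, perform the $\rho$-mass removal at quantile \emph{levels} $(\beta-\tfrac1n,\,\beta-\tfrac1n+\rho]$ via the inverse-CDF representation (so the modified event has probability exactly $\rho$, regardless of atoms) and relocate that mass to $\text{Quant}(\beta-\tfrac1n+\rho;\widetilde{\mathbb P})$ or beyond rather than ``to $+\infty$''; then $\text{Quant}(\beta;\mathbb Q_n)\to\text{Quant}(\beta+\rho;\widetilde{\mathbb P})$ by left-continuity of the quantile, which is the same device the paper uses, composed with your deterministic $+\varepsilon$ shift in the outer step. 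With that detail filled in, and keeping the standing restriction $\rho<1-\beta$ from Remark~\ref{remark:trivial:bound}, your argument is a complete and valid alternative proof.
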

\begin{figure}[t]
    \centering
    \includegraphics[width=\linewidth]{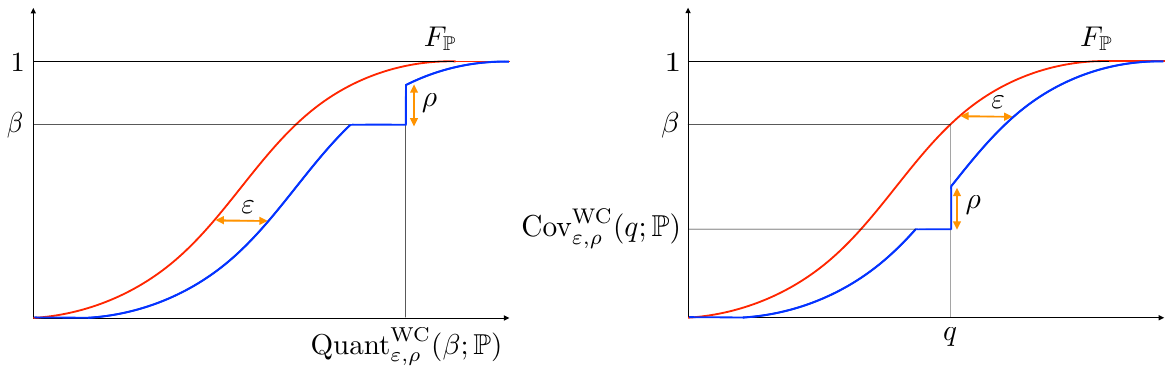}
    \vspace{-0.8cm}
    \caption{(Left) Worst-case quantile; (Right) Worst-case coverage.}
    \label{fig:Quant_Cov}
\end{figure}

In words, the worst-case quantile in the LP ambiguity set $\mathbb{B}_{\varepsilon, \rho}(\mathbb{P})$ corresponds to a quantile of $\mathbb{P}$ that is shifted by the local parameter $\varepsilon$ and adjusted by the global parameter $\rho$. We will now present the second result of this section.

\begin{proposition}[Worst-case coverage in the LP ambiguity set]
\label{prop:WC:Cov}
The following holds
\begin{align}
\label{eq:WC:Cov}
    \text{Cov}_{\varepsilon,\rho}^{\text{WC}}(q;\mathbb P) = F_{\mathbb P}(q - \varepsilon) - \rho.
\end{align}
\end{proposition}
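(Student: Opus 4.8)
The plan is to follow the same two-step template as the proof of Proposition~\ref{prop:WC:Quant}: first show that $F_{\mathbb P}(q-\varepsilon)-\rho$ lower bounds $F_{\mathbb Q}(q)$ uniformly over $\mathbb Q\in\mathbb B_{\varepsilon,\rho}(\mathbb P)$, and then exhibit a distribution in the ambiguity set that attains this value exactly.

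\emph{Lower bound.} Fix $\mathbb Q\in\mathbb B_{\varepsilon,\rho}(\mathbb P)$. Since the transportation cost $\mathds{1}\{|z_1-z_2|>\varepsilon\}$ is lower semicontinuous, \cite[Theorem~4.1]{villani2009optimal} provides an optimal coupling $\gamma^\star\in\Gamma(\mathbb P,\mathbb Q)$ with $\int\mathds{1}\{|z_1-z_2|>\varepsilon\}\,\mathrm d\gamma^\star=\text{LP}_\varepsilon(\mathbb P,\mathbb Q)=:\bar\rho\leq\rho$. Expressing both cumulative distribution function values through $\gamma^\star$ and splitting the event $\{z_1\leq q-\varepsilon\}$ according to whether $z_2\leq q$ or $z_2>q$ yields $F_{\mathbb P}(q-\varepsilon)-F_{\mathbb Q}(q)=\gamma^\star(z_1\leq q-\varepsilon)-\gamma^\star(z_2\leq q)\leq\gamma^\star(z_1\leq q-\varepsilon,\,z_2>q)$. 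The key observation is purely geometric: if $z_1\leq q-\varepsilon$ and $z_2>q$ then $z_2-z_1>\varepsilon$, so $\{z_1\leq q-\varepsilon,\,z_2>q\}\subseteq\{|z_1-z_2|>\varepsilon\}$; hence the right-hand side is at most $\bar\rho\leq\rho$. Therefore $F_{\mathbb Q}(q)\geq F_{\mathbb P}(q-\varepsilon)-\rho$ for every admissible $\mathbb Q$, which is the ``$\geq$'' inequality in \eqref{eq:WC:Cov}.

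\emph{Attainability.} In contrast to Proposition~\ref{prop:WC:Quant}, a single distribution suffices here (no limiting sequence is needed, since $F_{\mathbb Q}$ is evaluated at a fixed point rather than inverted). Translate $\mathbb P$ to the right by $\varepsilon$, obtaining $\widetilde{\mathbb P}$ with $F_{\widetilde{\mathbb P}}=F_{\mathbb P}(\cdot-\varepsilon)$ and $\text{W}_\infty(\mathbb P,\widetilde{\mathbb P})\leq\varepsilon$; then strip off a sub-probability measure of total mass $\min\{\rho,F_{\mathbb P}(q-\varepsilon)\}$ from the restriction of $\widetilde{\mathbb P}$ to $(-\infty,q]$ and relocate it to a single point $q'>q$, yielding $\mathbb Q$. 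By Proposition~\ref{prop:LP:decomposition} (or, equivalently, by exhibiting the obvious coupling of $\mathbb P$ and $\mathbb Q$ whose cost equals the relocated mass) we get $\mathbb Q\in\mathbb B_{\varepsilon,\rho}(\mathbb P)$, and by construction $F_{\mathbb Q}(q)=F_{\mathbb P}(q-\varepsilon)-\min\{\rho,F_{\mathbb P}(q-\varepsilon)\}$. Combined with the lower bound this proves \eqref{eq:WC:Cov} whenever $F_{\mathbb P}(q-\varepsilon)\geq\rho$; when $F_{\mathbb P}(q-\varepsilon)<\rho$ the worst-case coverage equals $0$, so \eqref{eq:WC:Cov} is to be read with a truncation at $0$, in the same spirit in which Remark~\ref{remark:trivial:bound} truncates the worst-case quantile from above. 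One small measure-theoretic point: if $\widetilde{\mathbb P}$ has an atom in $(-\infty,q]$ of mass exceeding the amount to be moved, we relocate only part of that atom, which is legitimate because $\mathbb Q$ is defined directly as a measure.

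The only genuinely non-routine ingredient is the one-line geometric inclusion $\{z_1\leq q-\varepsilon,\,z_2>q\}\subseteq\{|z_1-z_2|>\varepsilon\}$ in the lower-bound step; everything else is bookkeeping with couplings and cumulative distribution functions. As an alternative derivation one could bypass the explicit construction altogether by combining the quantile--CDF duality $F_{\mathbb Q}(q)\geq\beta\iff\text{Quant}(\beta;\mathbb Q)\leq q$ with Proposition~\ref{prop:WC:Quant}, writing $\text{Cov}_{\varepsilon,\rho}^{\text{WC}}(q;\mathbb P)=\sup\{\beta:\text{Quant}_{\varepsilon,\rho}^{\text{WC}}(\beta;\mathbb P)\leq q\}=\sup\{\beta:\text{Quant}(\beta+\rho;\mathbb P)\leq q-\varepsilon\}=F_{\mathbb P}(q-\varepsilon)-\rho$; the drawback is that this route forces a separate treatment of the regime $\rho\geq 1-\beta$ from Remark~\ref{remark:trivial:bound}, which is why I would keep the direct argument above as the main proof.
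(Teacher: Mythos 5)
Your proof is correct and follows essentially the same route as the paper: the lower bound is the same CDF-splitting coupling inequality (the paper phrases it as a proof by contradiction over arbitrary couplings, you state it directly via an optimal coupling), and your attainability construction—translate $\mathbb P$ to the right by $\varepsilon$ and displace mass $\rho$ from below $q$ to a point above $q$—is the same mechanism as the paper's sequence $\mathbb Q_n$, except that you attain the value exactly with a single distribution while the paper takes a limit with $1/n$ slack. Your remark that the formula should be read as truncated at $0$ when $F_{\mathbb P}(q-\varepsilon)<\rho$ flags an implicit regime assumption in the paper rather than a gap in your argument.
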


Similarly to the worst-case quantile, the worst-case coverage in the LP ambiguity set $\mathbb{B}_{\varepsilon, \rho}(\mathbb{P})$ corresponds to the coverage of $\mathbb{P}$ shifted by the local parameter $\varepsilon$ and adjusted by the global parameter $\rho$. The proofs of Propositions~\ref{prop:WC:Quant} and \ref{prop:WC:Cov} are constructive, in the sense that we propose two sequences of distributions which attain, in the limit, the two quantities $\text{Quant}_{\varepsilon,\rho}^{\text{WC}}(\beta;\mathbb P)$ and $\text{Cov}_{\varepsilon,\rho}^{\text{WC}}(q;\mathbb P)$, respectively. The intuition for both constructions stems from Proposition~\ref{prop:LP:decomposition}, which allows us to construct every distribution in $\mathbb{B}_{\varepsilon, \rho}(\mathbb{P})$ using a two-step procedure that decouples the local and global perturbations. This intuition is illustrated in Figure~\ref{fig:Quant_Cov}.


\section{Distributionally Robust Conformal Prediction}
\label{sec:robust:conformal}

In this section, we demonstrate how the worst-case quantile and coverage introduced earlier enable the construction of a confidence interval and its worst-case coverage for all distributions in the LP ambiguity set. We start by defining the prediction set
\begin{align}
\label{eq:WC:prediction:set}
    C_{\varepsilon,\rho}^{1-\alpha}(x;\mathbb P) := \left\{y\in\mathcal{Y}:\,s(x,y)\leq \text{Quant}_{\varepsilon,\rho}^{\text{WC}}(1 - \alpha;\mathbb P) \right\},
\end{align}
where, as noted in Proposition~\ref{prop:WC:Quant}, $\text{Quant}_{\varepsilon,\rho}^{\text{WC}}(1 - \alpha;\mathbb P) = \text{Quant}(1 - \alpha +\rho; \mathbb P) + \varepsilon$. Observe that $C_{\varepsilon,\rho}(x;\mathbb P)$ depends on the training distribution $\mathbb P$, which is unknown. Instead, we assume access to $n$ exchangeable data points $\{s(X_i,Y_i)\}_{i=1}^n \sim \mathbb P$. Based on this, we define the empirical distribution
\begin{align*}
    \widehat{\mathbb P}_n := \frac{1}{n} \sum_{i=1}^n \delta_{s(X_i,Y_i)}
\end{align*}
and consider the empirical confidence set $C_{\varepsilon,\rho}^{1-\alpha}(x;\widehat{\mathbb P}_n)$. We now state the main result of this paper.

\begin{theorem}[Conformal Prediction under LP distribution shifts]
\label{thm:robust:CP}
Let $s(X_{n+1}, Y_{n+1}) \sim \mathbb P_{\text{test}}$ be independent of $\{s(X_i, Y_i)\}_{i=1}^n \sim \mathbb P$. Moreover, let $\text{LP}_\varepsilon(\mathbb P, \mathbb P_{\text{test}}) \leq \rho$. Then, 
\begin{align}
\label{eq:robust:CP}
    \text{Prob}\left\{ Y_{n+1} \in C_{\varepsilon,\rho}^{1-\alpha}\left(X_{n+1};\widehat{\mathbb P}_n \right) \right\} \geq \frac{\lceil n(1-\alpha+\rho)\rceil}{n+1}-\rho.
\end{align}
\end{theorem}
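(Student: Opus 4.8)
The plan is to reduce the claim about the high-dimensional prediction set to a one-dimensional statement about nonconformity scores, and then combine the standard finite-sample conformal bound with the worst-case coverage formula from Proposition~\ref{prop:WC:Cov}. First I would observe that, by the definition of the prediction set in~\eqref{eq:WC:prediction:set}, the event $Y_{n+1} \in C_{\varepsilon,\rho}^{1-\alpha}(X_{n+1};\widehat{\mathbb P}_n)$ is exactly the event $s(X_{n+1},Y_{n+1}) \leq \text{Quant}_{\varepsilon,\rho}^{\text{WC}}(1-\alpha;\widehat{\mathbb P}_n)$, which by Proposition~\ref{prop:WC:Quant} equals $s(X_{n+1},Y_{n+1}) \leq \widehat q_\alpha^{\,\rho} + \varepsilon$, where $\widehat q_\alpha^{\,\rho} := \text{Quant}(1-\alpha+\rho;\widehat{\mathbb P}_n)$ is an empirical quantile of the calibration scores. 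So the target probability is $\text{Prob}\{ s(X_{n+1},Y_{n+1}) \leq \widehat q_\alpha^{\,\rho} + \varepsilon \}$, where the randomness is over both the calibration scores (which determine $\widehat q_\alpha^{\,\rho}$) and the test score.

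Next I would condition on the calibration sample, treating $\widehat q_\alpha^{\,\rho}$ as a fixed real number $q$. Since $s(X_{n+1},Y_{n+1}) \sim \mathbb P_{\text{test}}$ with $\text{LP}_\varepsilon(\mathbb P,\mathbb P_{\text{test}}) \leq \rho$, i.e.\ $\mathbb P_{\text{test}} \in \mathbb B_{\varepsilon,\rho}(\mathbb P)$, the worst-case coverage bound of Proposition~\ref{prop:WC:Cov} gives
\begin{align*}
    \text{Prob}\{ s(X_{n+1},Y_{n+1}) \leq q + \varepsilon \mid \widehat q_\alpha^{\,\rho} = q\} = F_{\mathbb P_{\text{test}}}(q+\varepsilon) \geq \text{Cov}_{\varepsilon,\rho}^{\text{WC}}(q+\varepsilon;\mathbb P) = F_{\mathbb P}(q) - \rho.
\end{align*}
Taking expectations over the calibration sample, the target probability is at least $\mathbb E\big[ F_{\mathbb P}(\widehat q_\alpha^{\,\rho}) \big] - \rho$. (One subtlety: $\mathbb P_{\text{test}}$ is a fixed distribution, not chosen adversarially per calibration draw, but since the inequality $F_{\mathbb P_{\text{test}}}(q+\varepsilon)\geq F_{\mathbb P}(q)-\rho$ holds for \emph{every} $\mathbb Q\in\mathbb B_{\varepsilon,\rho}(\mathbb P)$ and every $q$, it holds in particular for $\mathbb P_{\text{test}}$ and the realized $q$, so the conditioning argument goes through verbatim.)

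It then remains to lower-bound $\mathbb E\big[ F_{\mathbb P}(\widehat q_\alpha^{\,\rho}) \big]$ by $\lceil n(1-\alpha+\rho)\rceil/(n+1)$. This is a purely classical fact about empirical quantiles: with $m := \lceil n(1-\alpha+\rho)\rceil$, $\widehat q_\alpha^{\,\rho}$ is (for $m \le n$) the $m$-th order statistic $S_{(m)}$ of the i.i.d.\ scores $S_1,\dots,S_n \sim \mathbb P$, and $F_{\mathbb P}(S_{(m)}) \geq U_{(m)}$ where $U_{(m)}$ is the $m$-th order statistic of $n$ i.i.d.\ uniforms (equality if $F_{\mathbb P}$ is continuous, and $\geq$ in general by the standard quantile-coupling/left-continuity argument), so $\mathbb E[F_{\mathbb P}(\widehat q_\alpha^{\,\rho})] \geq \mathbb E[U_{(m)}] = m/(n+1)$. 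One should also handle the edge case $m > n$, i.e.\ $1-\alpha+\rho$ close to $1$: then $\widehat q_\alpha^{\,\rho} = \max_i S_i$, $F_{\mathbb P}(\widehat q_\alpha^{\,\rho}) \ge U_{(n)}$ with $\mathbb E[U_{(n)}] = n/(n+1) \ge m/(n+1)$ is false, so instead one notes that the right-hand side $m/(n+1) - \rho \le (n+1)/(n+1) - \rho \le 1$ and the trivial bound, or more carefully that the whole statement is only non-vacuous when $m \le n+1$; I would simply remark that when $m/(n+1) - \rho \le 0$ there is nothing to prove, and otherwise $m \le n$ necessarily (since $m - \rho(n+1) > 0$ forces... ) — in any case the main regime is $m \le n$ and the order-statistic bound applies. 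The main obstacle is this last step's careful bookkeeping: making sure the finite-sample correction $\lceil n(1-\alpha+\rho)\rceil$ (dividing by $n$ inside $\text{Quant}$ but by $n+1$ in the final bound) lines up with the order-statistic expectation $m/(n+1)$, and confirming that the inequality $F_{\mathbb P}(S_{(m)}) \geq U_{(m)}$ holds without continuity assumptions on $\mathbb P$ — this is where I would be most careful, invoking the quantile transform and the left-continuity of $\text{Quant}(\cdot;\cdot)$ already used in the proof of Proposition~\ref{prop:WC:Quant}.
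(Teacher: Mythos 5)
Your proposal is correct and follows essentially the same route as the paper: condition on the calibration sample, rewrite the conditional coverage as $F_{\mathbb P_{\text{test}}}$ evaluated at the worst-case quantile via Proposition~\ref{prop:WC:Quant}, lower-bound it by $F_{\mathbb P}(\text{Quant}(1-\alpha+\rho;\widehat{\mathbb P}_n))-\rho$ via Proposition~\ref{prop:WC:Cov}, and then take expectations and apply the classical bound $\mathbb E[F_{\mathbb P}(\text{Quant}(\beta;\widehat{\mathbb P}_n))]\geq \lceil n\beta\rceil/(n+1)$. The only difference is that the paper simply cites this last bound (Lemma~D.3 of \cite{cauchois2024robust}) whereas you re-derive it through the order-statistic/uniform-coupling argument; your edge-case discussion for $m>n$ is a bit garbled but immaterial, since the paper's standing assumption $\rho<\alpha$ (from Remark~\ref{remark:trivial:bound}) keeps you in the regime $m\leq n$, and otherwise the empirical quantile is $+\infty$ and coverage is trivially one.
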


A few remarks are in order. First, the local parameter $\varepsilon$ affects only the size of the confidence interval, but not its coverage guarantee. This is expected, given the construction of the two sequences of distributions that achieve the worst-case quantile and coverage in Propositions~\ref{prop:WC:Quant} and \ref{prop:WC:Cov}, respectively (see also the illustration in Figure~\ref{fig:Quant_Cov}). Each distribution in the sequence was obtained by translating $\mathbb P$ to the right by $\varepsilon$, a transformation that clearly does not affect the confidence level. In contrast, the global shift parameter $\rho$ influences both the coverage and the size of the prediction set: it shifts the quantile level from $1-\alpha$ to $1-\alpha+\rho$, and appears subtractively in the coverage bound. This change in quantile level often has a more pronounced effect on the size of the prediction set than the additive $\varepsilon$ term, particularly when the score distribution is light-tailed. Meanwhile, the reduction in coverage due to $\rho$ decreases with the calibration size $n$, and becomes negligible in the large-sample regime, scaling as $\mathcal{O}(1/n)$. Finally, as expected, the distribution shift reduces the coverage below the desired $1-\alpha$ level. The following corollary provides an adjusted coverage for the worst-case quantile, ensuring a $1-\alpha$ confidence level in \eqref{eq:robust:CP}.

\begin{corollary}[$1-\alpha$ coverage]
\label{cor:robust:CP}
Let $\beta = \alpha + {(\alpha-\rho-2)}/{n}$. Under the same conditions as in Theorem~\ref{thm:robust:CP}, we have
\begin{align}
\label{eq:cor:robust:CP}
    \text{Prob}\left\{ Y_{n+1} \in C_{\varepsilon,\rho}^{1-\beta}\left(X_{n+1};\widehat{\mathbb P}_n \right) \right\} \geq 1 - \alpha.
\end{align}
\end{corollary}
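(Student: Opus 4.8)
The plan is to derive Corollary~\ref{cor:robust:CP} directly from Theorem~\ref{thm:robust:CP} by choosing the confidence parameter $\beta$ so that the right-hand side of~\eqref{eq:robust:CP}, with $\alpha$ replaced by $\beta$, is at least $1-\alpha$. First I would write out what Theorem~\ref{thm:robust:CP} gives when applied with confidence level $1-\beta$ in place of $1-\alpha$:
\begin{align*}
    \text{Prob}\left\{ Y_{n+1} \in C_{\varepsilon,\rho}^{1-\beta}\left(X_{n+1};\widehat{\mathbb P}_n \right) \right\} \geq \frac{\lceil n(1-\beta+\rho)\rceil}{n+1}-\rho.
\end{align*}
So it suffices to show that the stated choice $\beta = \alpha + (\alpha-\rho-2)/n$ makes $\lceil n(1-\beta+\rho)\rceil/(n+1) - \rho \geq 1-\alpha$, i.e.\ $\lceil n(1-\beta+\rho)\rceil \geq (n+1)(1-\alpha+\rho)$.

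The key computation is to simplify $n(1-\beta+\rho)$ for this $\beta$. Substituting, $n(1-\beta+\rho) = n - n\beta + n\rho = n - n\alpha - (\alpha-\rho-2) + n\rho = n(1-\alpha+\rho) + 2 - \alpha + \rho = (n+1)(1-\alpha+\rho) + (1 - \alpha + \rho - 1 + \alpha - \rho) \cdot(\ldots)$ — more carefully, $n(1-\alpha+\rho) + (2-\alpha+\rho) = (n+1)(1-\alpha+\rho) + (2-\alpha+\rho) - (1-\alpha+\rho) = (n+1)(1-\alpha+\rho) + 1$. Hence $n(1-\beta+\rho) = (n+1)(1-\alpha+\rho) + 1 > (n+1)(1-\alpha+\rho)$, and therefore $\lceil n(1-\beta+\rho)\rceil \geq n(1-\beta+\rho) > (n+1)(1-\alpha+\rho)$, which is exactly the inequality needed. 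Dividing by $n+1$ and subtracting $\rho$ yields $\lceil n(1-\beta+\rho)\rceil/(n+1) - \rho \geq 1-\alpha$, and combining with the displayed consequence of the theorem gives~\eqref{eq:cor:robust:CP}.

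I would also include a brief check that $\beta$ is a legitimate miscoverage parameter — in particular $\beta \geq 0$ and $\beta < 1$ in the regimes of interest — and note that $\beta < \alpha$ (since $\alpha - \rho - 2 < 0$ whenever $\alpha < 2$, which always holds), so that the adjustment shrinks the nominal miscoverage as expected, enlarging the prediction set. The main obstacle here is not conceptual but purely arithmetic bookkeeping: one must be careful with the ceiling function (it only helps, since $\lceil t \rceil \geq t$) and with the fact that the $+1$ gained from the algebra is what absorbs the $(n+1)$-versus-$n$ discrepancy between $n(1-\beta+\rho)$ and $(n+1)(1-\alpha+\rho)$. Everything else is an immediate invocation of Theorem~\ref{thm:robust:CP}.
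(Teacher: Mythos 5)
Your proposal is correct and follows essentially the same route as the paper: apply Theorem~\ref{thm:robust:CP} with $\beta$ in place of $\alpha$ and verify algebraically that $n(1-\beta+\rho) = (n+1)(1-\alpha+\rho)+1$, so the bound $\lceil n(1-\beta+\rho)\rceil/(n+1)-\rho \geq 1-\alpha$ holds (the paper states the same computation as a chain of sufficient conditions on $\beta$). Nothing essential is missing.
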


Finally, recall from Proposition~\ref{prop:LP:decomposition} that the LP pseudo-metric recovers the Total Variation and the $\infty$-Wasserstein distances if $\varepsilon =0$ and $\rho = 0$, respectively. As a consequence, the guarantee in Corollary~\ref{cor:robust:CP} can be immediately specialized to these additional types of distribution shifts. We do this in the following corollary.

\begin{corollary}[TV and $\text{W}_{\infty}$ distribution shifts]
\label{cor:TV:Winf}
Under the same conditions as in Corollary~\ref{cor:robust:CP},
\begin{itemize}
    \item[(i)] \emph{Total Variation distance.} If $\varepsilon = 0$, then guarantee \eqref{eq:cor:robust:CP} holds with
    \begin{align}
    \label{eq:guarantee:TV}
        C_{0,\rho}^{1-\beta}(X_{n+1};\widehat{\mathbb P}_n) = \left\{y\in\mathcal{Y}:\,s(x,y)\leq \text{Quant} \left(1 - \frac{(\alpha-\rho) (n+1) -2}{n}; \widehat{\mathbb P}_n \right)
        \right\}.
    \end{align}
    \item[(ii)] \emph{$\infty$-Wasserstein distance.} If $\rho = 0$, then guarantee \eqref{eq:cor:robust:CP} holds with
    \begin{align}
    \label{eq:guarantee:Winf}
        C_{\varepsilon, 0}^{1-\beta}(X_{n+1};\widehat{\mathbb P}_n) = \left\{y\in\mathcal{Y}:\,s(x,y)\leq \text{Quant} \left(1 - \frac{\alpha (n+1)-2}{n}; \widehat{\mathbb P}_n \right) + \varepsilon
        \right\}.
    \end{align}
\end{itemize}
\end{corollary}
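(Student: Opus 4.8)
The plan is to specialize Corollary~\ref{cor:robust:CP} to the two extreme regimes of the LP ambiguity set identified in Corollary~\ref{cor:relationship:other:metrics}, namely $\varepsilon = 0$ (Total Variation) and $\rho = 0$ ($\infty$-Wasserstein). In both cases, the coverage guarantee \eqref{eq:cor:robust:CP} is already established by Corollary~\ref{cor:robust:CP}; what remains is purely to simplify the expression for the prediction set $C_{\varepsilon,\rho}^{1-\beta}(X_{n+1};\widehat{\mathbb P}_n)$ under the corresponding parameter choice. So this is a computational specialization, not a new argument.

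First I would recall the definition \eqref{eq:WC:prediction:set} together with the closed form from Proposition~\ref{prop:WC:Quant}, which gives
\begin{align*}
    C_{\varepsilon,\rho}^{1-\beta}(X_{n+1};\widehat{\mathbb P}_n) = \left\{y\in\mathcal{Y}:\,s(x,y)\leq \text{Quant}(1 - \beta +\rho; \widehat{\mathbb P}_n) + \varepsilon \right\},
\end{align*}
and then substitute $\beta = \alpha + (\alpha-\rho-2)/n$ from Corollary~\ref{cor:robust:CP}. A short calculation shows $1 - \beta + \rho = 1 - \alpha + \rho - (\alpha - \rho - 2)/n = 1 - ((\alpha-\rho)(n+1) - 2)/n$. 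For part~(i), setting $\varepsilon = 0$ removes the $+\varepsilon$ term and leaves exactly the quantile level appearing in \eqref{eq:guarantee:TV}. For part~(ii), setting $\rho = 0$ makes $\beta = \alpha + (\alpha-2)/n$, so $1 - \beta = 1 - (\alpha(n+1) - 2)/n$, and the $+\varepsilon$ term is retained, matching \eqref{eq:guarantee:Winf}.

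The only mild subtlety — and the one place I would be slightly careful — is bookkeeping with the $\rho$ inside the quantile level versus the $\rho$ absorbed into $\beta$: both occurrences of $\rho$ must be tracked consistently when forming $1 - \beta + \rho$, since $\beta$ itself depends on $\rho$. Once that is done, both formulas drop out by elementary algebra, and the coverage bound $\text{Prob}\{Y_{n+1} \in C\} \geq 1-\alpha$ transfers verbatim from Corollary~\ref{cor:robust:CP}. There is no real obstacle here; the result is a direct corollary, and the proof is essentially a two-line substitution in each case.
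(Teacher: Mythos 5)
Your proposal is correct and is exactly the paper's (implicit) argument: the corollary is a direct specialization of Corollary~\ref{cor:robust:CP}, obtained by writing $C_{\varepsilon,\rho}^{1-\beta}$ via Proposition~\ref{prop:WC:Quant} as a threshold at $\text{Quant}(1-\beta+\rho;\widehat{\mathbb P}_n)+\varepsilon$, substituting $\beta=\alpha+(\alpha-\rho-2)/n$, and setting $\varepsilon=0$ or $\rho=0$. Your algebra checks out, so nothing further is needed.
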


We note that a guarantee similar to \eqref{eq:guarantee:TV} was previously established in \cite{cauchois2024robust}, which addressed $f$-divergence distribution shifts, by recognizing the TV distance as a special case of the $f$-divergence.


\section{Experiments}
\label{sec:experiments}

We conduct experiments on three classification problems: MNIST~\cite{Lecun_1998}, ImageNet~\cite{Li_2009}, and iWildCam~\cite{Beery_2020}. We also compare our algorithm against five other methods: standard split conformal prediction (SC), $\chi^2$-divergence robust conformal prediction~\cite{cauchois2024robust}, conformal prediction under covariate shift (Weight)~\cite{tibshirani2019conformal}, randomly smoothed conformal prediction (RSCP)~\cite{Gendler_2022}, and fine-grained conformal prediction (FG-CP)~\cite{ai2024not}. Each method defines its own prediction set; for our method, this is the robust set $C_{\varepsilon,\rho}^{1-\beta}(x; \widehat{\mathbb{P}}_n)$ from Corollary~\ref{cor:robust:CP}. While additional methods exist in the literature, they typically constitute minor variations or special cases of the five representative baselines we benchmark against.

We evaluate all methods in terms of \emph{validity} and \emph{efficiency}. Validity is computed as the average empirical coverage across $M$ independent calibration-test splits:
\begin{align*}
    \frac{1}{M} \sum_{j=1}^M \left[ \frac{1}{K} \sum_{i=1}^K \mathds{1}\left\{ y_i^{(j)} \in C\left(x_i^{(j)}; \widehat{\mathbb P}_n^{(j)} \right) \right\} \right],
\end{align*}
where $\widehat{\mathbb{P}}_n^{(j)}$ denotes the empirical distribution of the $j$-th calibration set, and $\{(x_i^{(j)}, y_i^{(j)})\}_{i=1}^K$ denotes the corresponding test set. Efficiency is evaluated as the average prediction set size across the same $M$ splits and $K$ test samples. For all experiments, we set the miscoverage level to $\alpha = 0.1$ and use the negative log-likelihood (NLL) score, $s(x, y) = -\log p(y \mid x)$, as the nonconformity measure. For ImageNet, we use a pre-trained ResNet-152 model; for MNIST, we train a small ResNet architecture from scratch; and for iWildCam, we adopt the pre-trained ResNet-50 model provided by~\cite{Beery_2020}.

The code to reproduce all experiments is available at our GitHub repository.\footnote{\url{https://github.com/olivrw/LP-robust-conformal.git}}


\subsection{Data-Space Distribution Shift: MNIST and ImageNet}
\label{sec:datspace_exp}

\begin{figure}[t]
\centering
\includegraphics[width=0.4\linewidth]{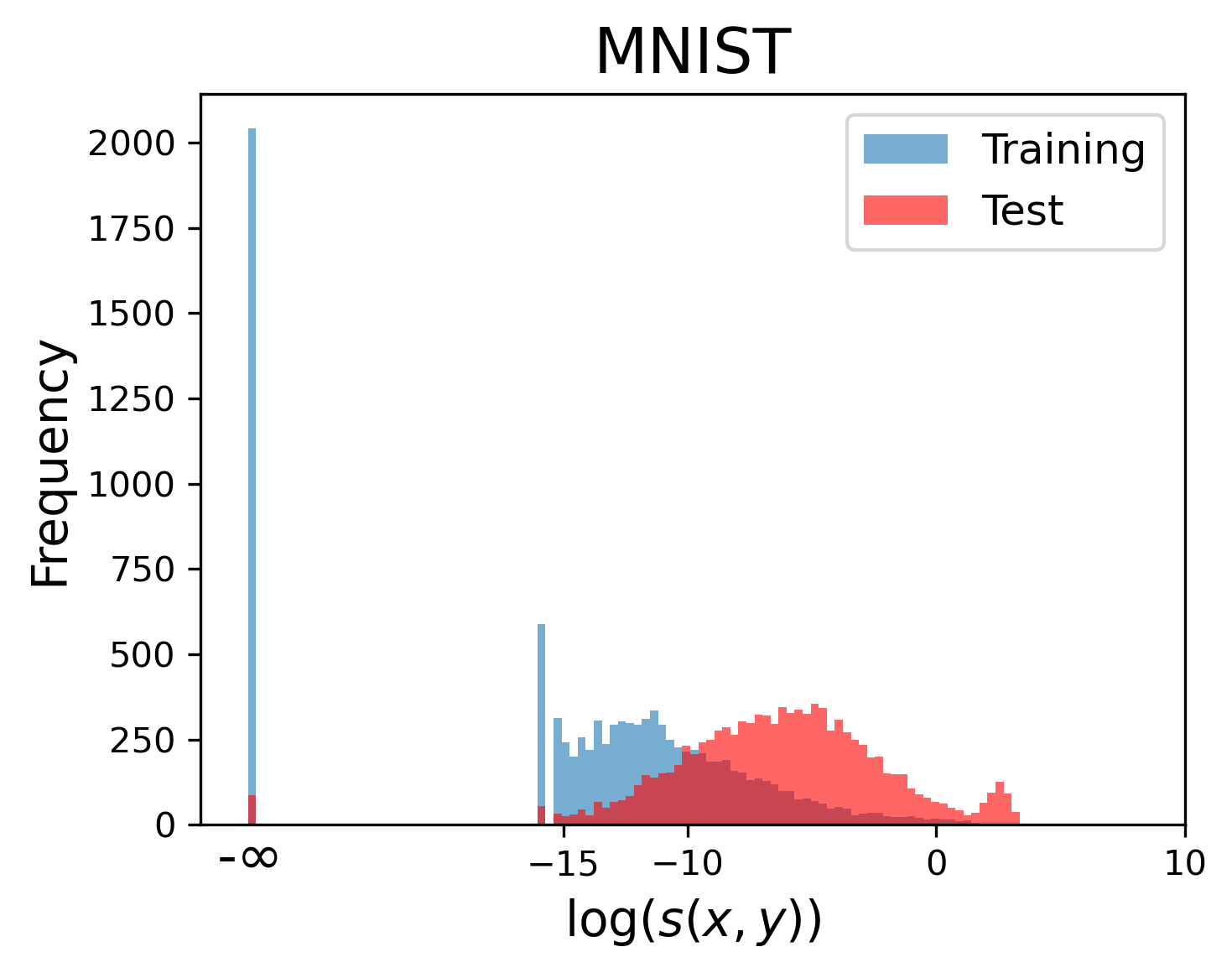}
\includegraphics[width=0.4\linewidth]{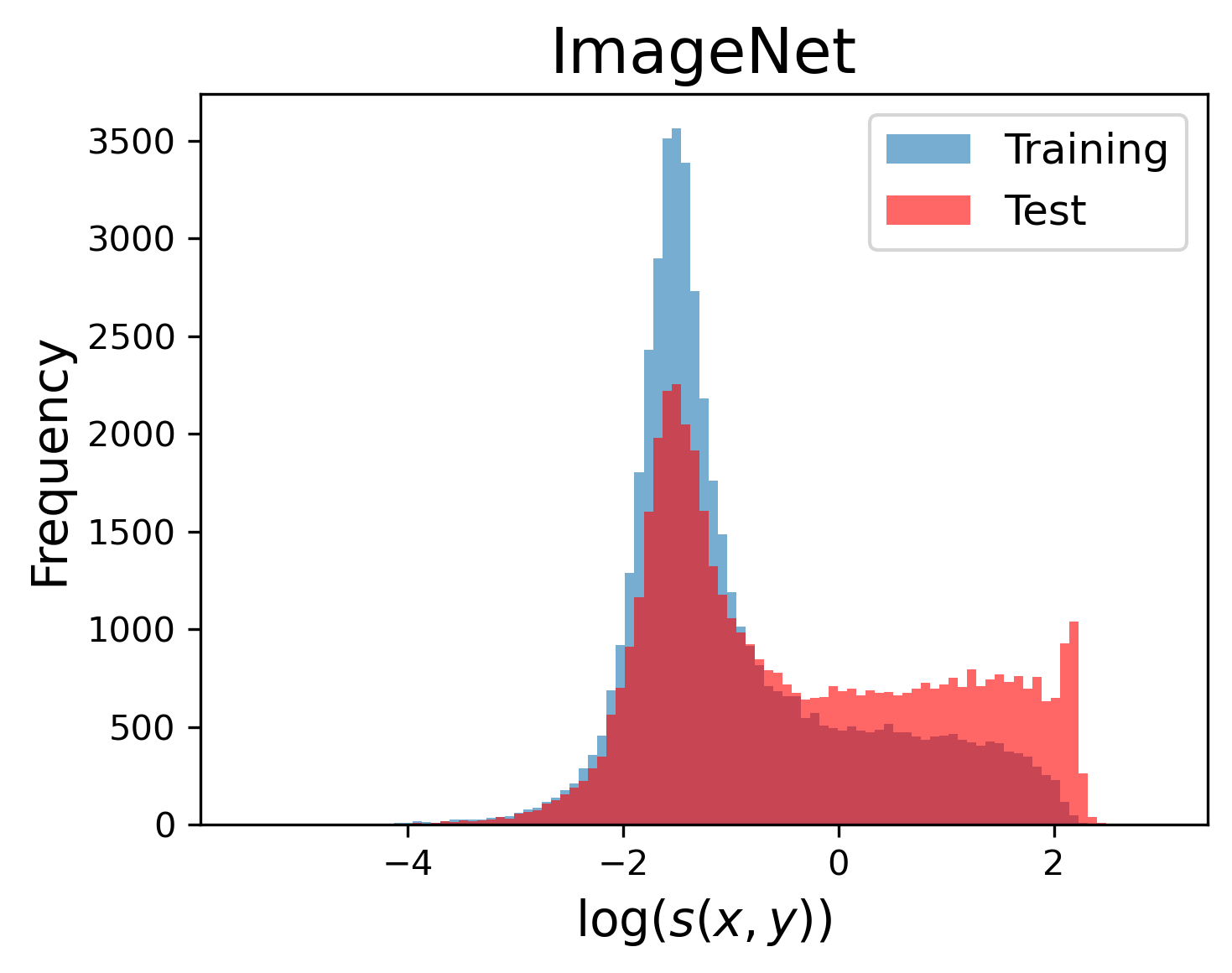}
\vspace{-0.2cm}
\caption{\textbf{Score distribution shift}. Plots for MNIST and ImageNet under ($p=0.05, u=1.0$) perturbation. The score distribution obtained from the unperturbed data (red), and from the perturbed data (blue) are plotted in log scale. For ImageNet, we removed 18 negative-valued outliers ranging from -5.5 to -10 for visualization purposes.}
\label{fig:score_shift}
\end{figure}

Following the split conformal procedure, we partition the hold-out validation set into a calibration set of $n = 1000$ samples and a test set of $K = 5000$ samples drawn uniformly from the remaining data. We simulate local perturbations by adding i.i.d.\ noise from $\mathcal{U}([-u, u])$ to every channel of each test image. Global perturbations are introduced by randomly corrupting a fraction $p$ of test labels, replacing each with a neighboring class label. This setup captures realistic scenarios in which test-time inputs are noisy and some labels may be incorrect due to annotation errors~\cite{Feldman_2023, Zargarbashi_2024}. Figure~\ref{fig:score_shift} illustrates the resulting shift in the score distribution under the perturbation setting $(p = 0.05, u = 1.0)$.

Calibration NLL scores are computed on unperturbed calibration data points to determine empirical quantiles. Constructing prediction sets is then straightforward for standard conformal prediction. For the robust algorithms, our method naturally accounts for both global and local perturbations through the parameters $\rho$ and $\varepsilon$, respectively. Following Proposition~\ref{prop:propagation:LP}, we set $\rho = p$ to reflect the global label corruption level. While the same proposition suggests setting $\varepsilon = ku$, where $k$ is the Lipschitz constant of the score function, estimating $k$ from data often leads to overly conservative values, as a global Lipschitz constant may not reflect the local behavior of the score function where the data are concentrated. In practice, we find that a fixed value $k = 2$ suffices to ensure valid coverage across the full range of data-space shifts $u$; we refer to this method as $\text{LP}_\varepsilon$. In parallel, we evaluate a data-driven variant, called $\text{LP}_\varepsilon^{\text{est}}$, which estimates both $\varepsilon$ and $\rho$ directly from samples using the algorithm described in Appendix~\ref{sec:estimation:rho:vareps}. This version achieves similar robustness while adapting more flexibly to the underlying shift. For the $\chi^2$, FG-CP, RSCP, and Weight conformal prediction methods, we follow the original experimental setups described in their respective references; implementation details are provided in Appendix~\ref{sec:experimental:setup}.

\begin{figure*}[t] \label{fig: data_shift_results}
\centering
\includegraphics[width=0.335\linewidth]{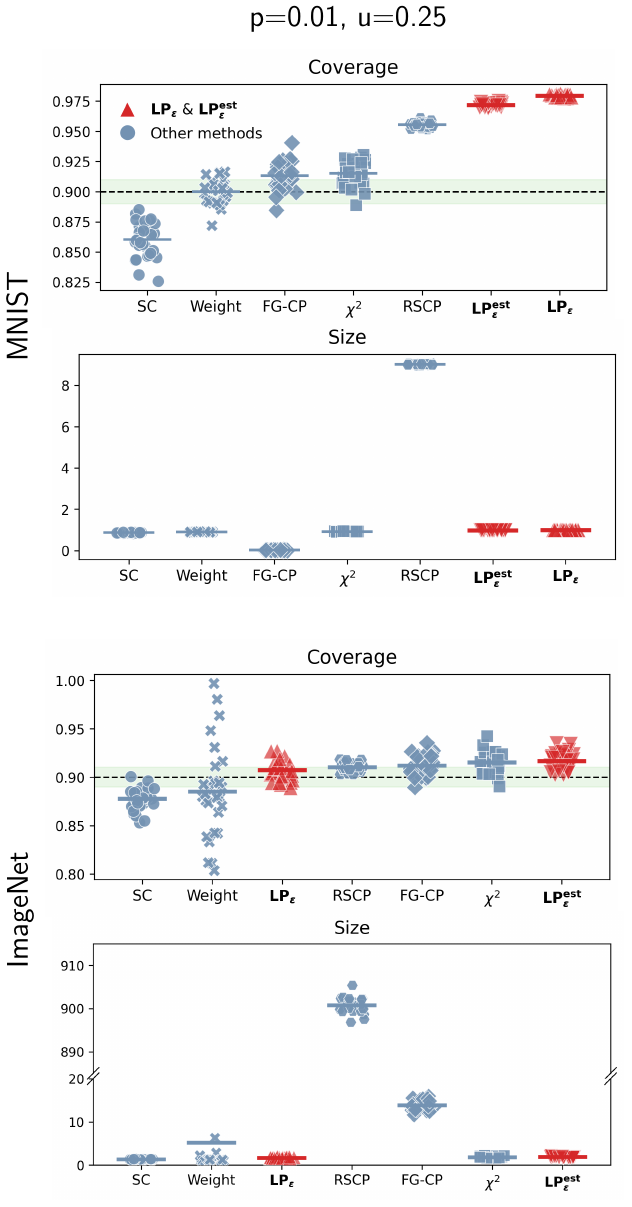}
\includegraphics[width=0.315\linewidth]{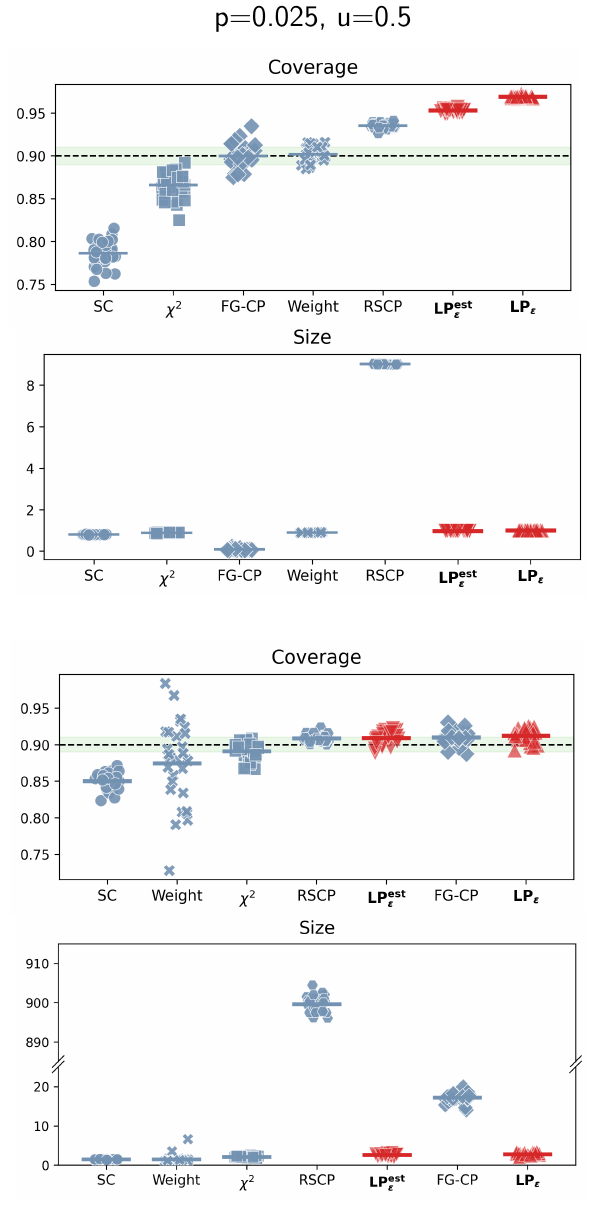}
\includegraphics[width=0.315\linewidth]{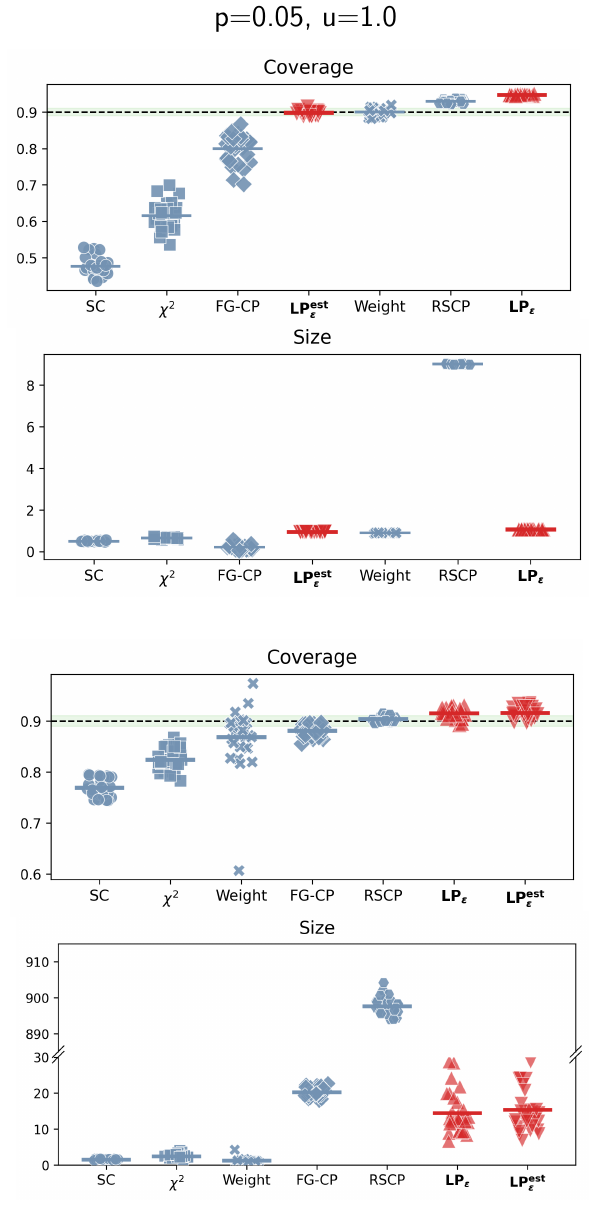}
\vspace{-0.2cm}
\caption{\textbf{MNIST and ImageNet}. Coverage (validity) and size (efficiency). In the coverage plots, the long dashed line indicates the target $1 - \alpha$ level. Scattered points show empirical coverage and prediction set size for each calibration–test split, while short horizontal lines denote averages across $M = 30$ splits. The proposed methods are highlighted in bold/red.}
\label{fig:data_shift_results}
\end{figure*}

Figure~\ref{fig:data_shift_results} reports the empirical coverage and prediction set size (averaged over 30 calibration–test splits) for the seven methods under three levels of noise corruption: $(p,u) \in \{(0.01, 0.25), (0.025, 0.5)$, $(0.05, 1.0)\}$. As expected, standard conformal prediction (SC) fails to maintain coverage as the corruption level increases, due to its lack of robustness to distribution shift. In contrast, both variants of our method—$\text{LP}_\varepsilon$, which uses a fixed Lipschitz-based estimate for $\varepsilon$, and $\text{LP}_\varepsilon^{\text{est}}$, which estimates $(\varepsilon, \rho)$ directly from data—consistently maintain valid coverage across all settings. They also achieve comparable prediction set sizes, demonstrating the effectiveness of data-driven parameter estimation. Among the remaining baselines, only RSCP maintains valid coverage under all shift levels, but it does so at the cost of extremely large prediction sets, particularly for ImageNet. The other three baselines, i.e., $\chi^2$, FG-CP, and Weight, exhibit coverage degradation as the shift intensity increases. This is expected: these methods assume absolute continuity between the training and test distributions, a condition violated in our experimental setup (see Figure~\ref{fig:score_shift}). In particular, when test-time perturbations cause the support of the test distribution to lie partially outside that of the training distribution, methods relying on importance weighting or $f$-divergence balls struggle to provide valid guarantees. In contrast, our LP-based approach requires no such absolute continuity and remains robust to both global label corruption and local input noise.

This numerical illustration also highlights an important \emph{modeling} point. The LP-based approach is specifically designed to capture local and global perturbations of the data distribution, as introduced in this experiment. It provides a principled framework for handling such shifts, complete with closed-form expressions for both the worst-case quantile and coverage. As a result, the strong empirical performance observed here is not coincidental: our method is theoretically tailored to this class of distribution shifts, and no other method can offer stronger worst-case guarantees within the same ambiguity set.


\subsection{Real-world Distribution Shift: iWildCam}
\begin{figure}[t] 
\label{fig: wilds_grid}
\centering
\includegraphics[width=0.48\linewidth]{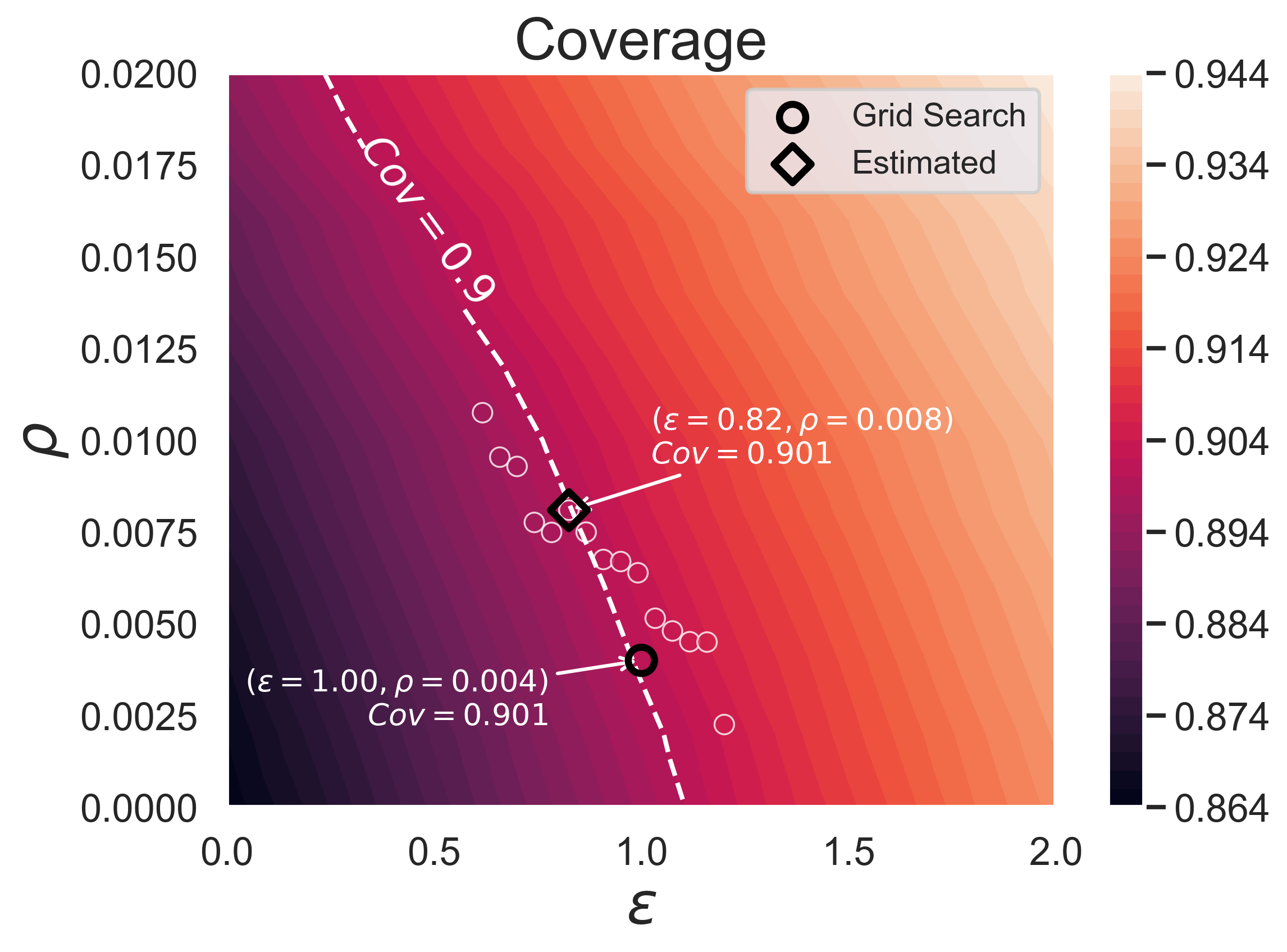}
\hspace{0.02\linewidth}
\includegraphics[width=0.48\linewidth]{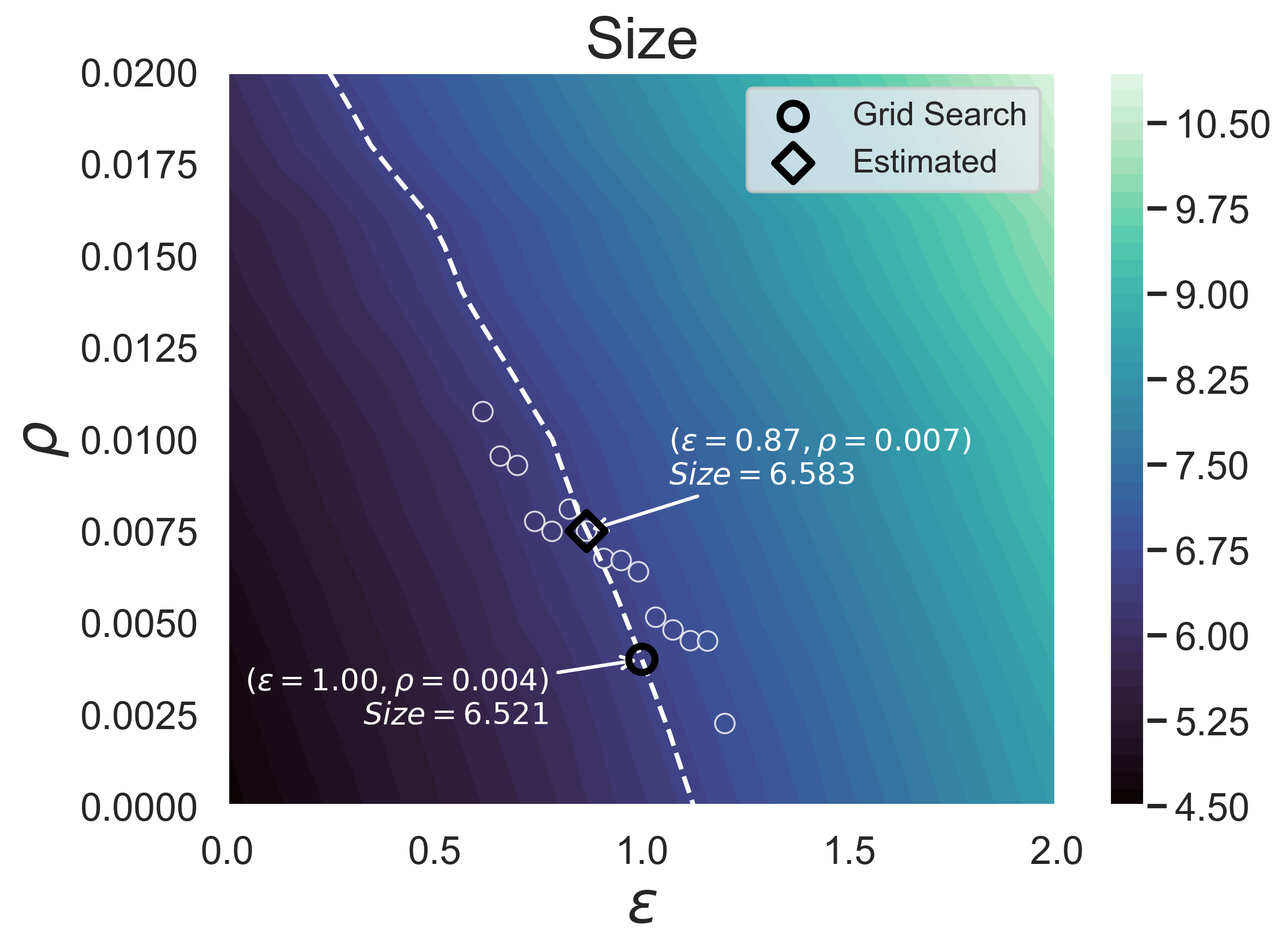}
\vspace{-0.4cm}
\caption{\textbf{iWildCam.} Coverage (left) and prediction set size (right) over a range of $(\varepsilon,\rho)$ values. The white dashed line denotes the set of $(\varepsilon, \rho)$ pairs achieving exactly 90\% empirical coverage. White circles correspond to points estimated by the algorithm in Appendix~\ref{sec:estimation:rho:vareps}, and the best-performing pair among them (yielding the smallest prediction set) is marked by a black diamond. For comparison, the smallest prediction set along the 90\% coverage frontier is shown with a black circle.}
\end{figure}

We now evaluate our algorithm's ability to handle real-world distribution shifts using the iWildCam dataset~\cite{Beery_2020}, a multi-class classification task characterized by naturally occurring train-test discrepancies. These arise from changes in camera trap placement and timing, which induce variability in illumination, color, viewpoint, background, vegetation, and species frequency. As described in~\cite{Koh_2021}, the dataset includes a training set, an out-of-distribution test set, and an in-distribution validation/test set consisting of images captured from the same camera locations as the training data but on different dates. We use the in-distribution test set for calibration and the out-of-distribution test set for evaluation.

Figure~\ref{fig: wilds_grid} illustrates how coverage and prediction set size vary over a grid of $(\varepsilon, \rho)$ values in the LP ambiguity set. The left panel shows that all pairs lying to the right of the black dotted contour (the 90\% coverage curve) yield valid coverage under the real distribution shift. This demonstrates that LP ambiguity sets capture the relevant perturbations affecting iWildCam, without assuming prior knowledge of the shift type or structure. The right panel shows the corresponding prediction set sizes. Notably, moving further right from the 90\% contour leads to increasingly conservative sets. White circles in both panels denote $(\varepsilon, \rho)$ pairs estimated by the data-driven procedure described in Appendix~\ref{sec:estimation:rho:vareps}. The best among these---marked with a black diamond---achieves nearly identical coverage and prediction set size as the optimal point found by an exhaustive grid search (marked by a black circle). This proximity confirms that the proposed estimation algorithm reliably recovers high-quality ambiguity set parameters with limited test data.

Taken together, these results support two key takeaways: (1) LP ambiguity sets flexibly model real distribution shifts, delivering valid coverage across a broad region of the parameter space, and (2) the estimated $(\varepsilon, \rho)$ pair performs comparably to the best grid-tuned pair, both in coverage and efficiency.

\section*{Acknowledgement}

Liviu Aolaritei acknowledges support from the Swiss National Science Foundation through the Postdoc.Mobility Fellowship (grant agreement P500PT\_222215). Michael Jordan was funded by the Chair ``Markets and Learning,'' supported by Air Liquide, BNP PARIBAS ASSET MANAGEMENT Europe, EDF, Orange and SNCF, sponsors of the Inria Foundation. Youssef Marzouk and Julie Zhu acknowledge support from the US Department of Energy (DOE), Office of Advanced Scientific Computing Research, under grant DE-SC0023188. Youssef Marzouk and Zheyu Oliver Wang acknowledge support from the ExxonMobil Technology and Engineering Company.


\bibliographystyle{myabbrvnat} 
\bibliography{references.bib}

\begin{thebibliography}{49}
\providecommand{\natexlab}[1]{#1}
\providecommand{\url}[1]{\texttt{#1}}
\expandafter\ifx\csname urlstyle\endcsname\relax
  \providecommand{\doi}[1]{doi: #1}\else
  \providecommand{\doi}{doi: \begingroup \urlstyle{rm}\Url}\fi

\bibitem[Ai and Ren(2024)]{ai2024not}
J.~Ai and Z.~Ren.
\newblock Not all distributional shifts are equal: Fine-grained robust conformal inference.
\newblock \emph{arXiv preprint arXiv:2402.13042}, 2024.

\bibitem[Angelopoulos et~al.(2024)Angelopoulos, Barber, and Bates]{angelopoulos2024theoretical}
A.~N. Angelopoulos, R.~F. Barber, and S.~Bates.
\newblock Theoretical foundations of conformal prediction.
\newblock \emph{arXiv preprint arXiv:2411.11824}, 2024.

\bibitem[Aolaritei et~al.(2025)Aolaritei, Lanzetti, Chen, and D{\"o}rfler]{Aolaritei_2023}
L.~Aolaritei, N.~Lanzetti, H.~Chen, and F.~D{\"o}rfler.
\newblock Distributional uncertainty propagation via optimal transport.
\newblock \emph{IEEE Transactions on Automatic Control (Forthcoming)}, 2025.

\bibitem[Barber et~al.(2023)Barber, Candes, Ramdas, and Tibshirani]{Barber_2023}
R.~F. Barber, E.~J. Candes, A.~Ramdas, and R.~J. Tibshirani.
\newblock Conformal prediction beyond exchangeability.
\newblock \emph{The Annals of Statistics}, 51\penalty0 (2):\penalty0 816--845, 2023.

\bibitem[Bastani et~al.(2022)Bastani, Gupta, Jung, Noarov, Ramalingam, and Roth]{Bastani_2022}
O.~Bastani, V.~Gupta, C.~Jung, G.~Noarov, R.~Ramalingam, and A.~Roth.
\newblock Practical adversarial multivalid conformal prediction.
\newblock \emph{Advances in Neural Information Processing Systems}, 35:\penalty0 29362--29373, 2022.

\bibitem[Beery et~al.(2020)Beery, Cole, and Gjoka]{Beery_2020}
S.~Beery, E.~Cole, and A.~Gjoka.
\newblock The iwildcam 2020 competition dataset.
\newblock \emph{arXiv preprint arXiv:2004.10340}, 2020.

\bibitem[Bennouna and Van~Parys(2022)]{Bennouna_2023}
A.~Bennouna and B.~Van~Parys.
\newblock Holistic robust data-driven decisions.
\newblock \emph{arXiv preprint arXiv:2207.09560}, 2022.

\bibitem[Cand{\`e}s et~al.(2023)Cand{\`e}s, Lei, and Ren]{Candes_2023}
E.~Cand{\`e}s, L.~Lei, and Z.~Ren.
\newblock Conformalized survival analysis.
\newblock \emph{Journal of the Royal Statistical Society Series B: Statistical Methodology}, 85\penalty0 (1):\penalty0 24--45, 2023.

\bibitem[Cauchois et~al.(2024)Cauchois, Gupta, Ali, and Duchi]{cauchois2024robust}
M.~Cauchois, S.~Gupta, A.~Ali, and J.~C. Duchi.
\newblock Robust validation: Confident predictions even when distributions shift.
\newblock \emph{Journal of the American Statistical Association}, 119\penalty0 (548):\penalty0 3033--3044, 2024.

\bibitem[Chernozhukov et~al.(2018)Chernozhukov, W{\"u}thrich, and Yinchu]{chernozhukov_2018}
V.~Chernozhukov, K.~W{\"u}thrich, and Z.~Yinchu.
\newblock Exact and robust conformal inference methods for predictive machine learning with dependent data.
\newblock In \emph{Conference On learning theory}, pages 732--749. PMLR, 2018.

\bibitem[Clarkson et~al.(2024)Clarkson, Xu, Cucuringu, and Reinert]{Clarkson_2024}
J.~Clarkson, W.~Xu, M.~Cucuringu, and G.~Reinert.
\newblock Split conformal prediction under data contamination.
\newblock \emph{arXiv preprint arXiv:2407.07700}, 2024.

\bibitem[Cohen et~al.(2019)Cohen, Rosenfeld, and Kolter]{Cohen_2019}
J.~Cohen, E.~Rosenfeld, and Z.~Kolter.
\newblock Certified adversarial robustness via randomized smoothing.
\newblock In \emph{international conference on machine learning}, pages 1310--1320. PMLR, 2019.

\bibitem[Deng et~al.(2009)Deng, Dong, Socher, Li, Li, and Fei-Fei]{Li_2009}
J.~Deng, W.~Dong, R.~Socher, L.-J. Li, K.~Li, and L.~Fei-Fei.
\newblock Imagenet: A large-scale hierarchical image database.
\newblock In \emph{2009 IEEE conference on computer vision and pattern recognition}, pages 248--255. Ieee, 2009.

\bibitem[Fannjiang et~al.(2022)Fannjiang, Bates, Angelopoulos, Listgarten, and Jordan]{fannjiang2022conformal}
C.~Fannjiang, S.~Bates, A.~N. Angelopoulos, J.~Listgarten, and M.~I. Jordan.
\newblock Conformal prediction under feedback covariate shift for biomolecular design.
\newblock \emph{Proceedings of the National Academy of Sciences}, 119\penalty0 (43):\penalty0 e2204569119, 2022.

\bibitem[Feldman et~al.(2023)Feldman, Einbinder, Bates, Angelopoulos, Gendler, and Romano]{Feldman_2023}
S.~Feldman, B.-S. Einbinder, S.~Bates, A.~N. Angelopoulos, A.~Gendler, and Y.~Romano.
\newblock Conformal prediction is robust to dispersive label noise.
\newblock In \emph{Conformal and Probabilistic Prediction with Applications}, pages 624--626. PMLR, 2023.

\bibitem[Gendler et~al.(2021)Gendler, Weng, Daniel, and Romano]{Gendler_2022}
A.~Gendler, T.-W. Weng, L.~Daniel, and Y.~Romano.
\newblock Adversarially robust conformal prediction.
\newblock In \emph{International Conference on Learning Representations}, 2021.

\bibitem[Ghosh et~al.(2023)Ghosh, Shi, Belkhouja, Yan, Doppa, and Jones]{Ghosh_2023}
S.~Ghosh, Y.~Shi, T.~Belkhouja, Y.~Yan, J.~Doppa, and B.~Jones.
\newblock Probabilistically robust conformal prediction.
\newblock In \emph{Uncertainty in Artificial Intelligence}, pages 681--690. PMLR, 2023.

\bibitem[Gibbs and Candes(2021)]{Gibbs_2021}
I.~Gibbs and E.~Candes.
\newblock Adaptive conformal inference under distribution shift.
\newblock \emph{Advances in Neural Information Processing Systems}, 34:\penalty0 1660--1672, 2021.

\bibitem[Gibbs and Cand{\`e}s(2024)]{Gibbs_2022}
I.~Gibbs and E.~J. Cand{\`e}s.
\newblock Conformal inference for online prediction with arbitrary distribution shifts.
\newblock \emph{Journal of Machine Learning Research}, 25\penalty0 (162):\penalty0 1--36, 2024.

\bibitem[Gui et~al.(2024)Gui, Hore, Ren, and Barber]{Gui_2022}
Y.~Gui, R.~Hore, Z.~Ren, and R.~F. Barber.
\newblock Conformalized survival analysis with adaptive cut-offs.
\newblock \emph{Biometrika}, 111\penalty0 (2):\penalty0 459--477, 2024.

\bibitem[Jin et~al.(2023)Jin, Ren, and Cand{\`e}s]{jin2023sensitivity}
Y.~Jin, Z.~Ren, and E.~J. Cand{\`e}s.
\newblock Sensitivity analysis of individual treatment effects: A robust conformal inference approach.
\newblock \emph{Proceedings of the National Academy of Sciences}, 120\penalty0 (6):\penalty0 e2214889120, 2023.

\bibitem[Koh et~al.(2021)Koh, Sagawa, Marklund, Xie, Zhang, Balsubramani, Hu, Yasunaga, Phillips, Gao, et~al.]{Koh_2021}
P.~W. Koh, S.~Sagawa, H.~Marklund, S.~M. Xie, M.~Zhang, A.~Balsubramani, W.~Hu, M.~Yasunaga, R.~L. Phillips, I.~Gao, et~al.
\newblock Wilds: A benchmark of in-the-wild distribution shifts.
\newblock In \emph{International conference on machine learning}, pages 5637--5664. PMLR, 2021.

\bibitem[Kuhn et~al.(2024)Kuhn, Shafiee, and Wiesemann]{kuhn2024distributionally}
D.~Kuhn, S.~Shafiee, and W.~Wiesemann.
\newblock Distributionally robust optimization, 2024.

\bibitem[Kumar et~al.(2020)Kumar, Levine, Feizi, and Goldstein]{Kumar_2020}
A.~Kumar, A.~Levine, S.~Feizi, and T.~Goldstein.
\newblock Certifying confidence via randomized smoothing.
\newblock \emph{Advances in Neural Information Processing Systems}, 33:\penalty0 5165--5177, 2020.

\bibitem[LeCun(1998)]{Lecun_1998}
Y.~LeCun.
\newblock The mnist database of handwritten digits.
\newblock \emph{http://yann. lecun. com/exdb/mnist/}, 1998.

\bibitem[Lei and Cand{\`e}s(2021)]{Lei_2021}
L.~Lei and E.~J. Cand{\`e}s.
\newblock Conformal inference of counterfactuals and individual treatment effects.
\newblock \emph{Journal of the Royal Statistical Society Series B: Statistical Methodology}, 83\penalty0 (5):\penalty0 911--938, 2021.

\bibitem[Lindemann et~al.(2023{\natexlab{a}})Lindemann, Cleaveland, Shim, and Pappas]{lindemann2023safe}
L.~Lindemann, M.~Cleaveland, G.~Shim, and G.~J. Pappas.
\newblock Safe planning in dynamic environments using conformal prediction.
\newblock \emph{IEEE Robotics and Automation Letters}, 2023{\natexlab{a}}.

\bibitem[Lindemann et~al.(2023{\natexlab{b}})Lindemann, Qin, Deshmukh, and Pappas]{lindemann2023conformal}
L.~Lindemann, X.~Qin, J.~V. Deshmukh, and G.~J. Pappas.
\newblock Conformal prediction for stl runtime verification.
\newblock In \emph{Proceedings of the ACM/IEEE 14th International Conference on Cyber-Physical Systems (with CPS-IoT Week 2023)}, pages 142--153, 2023{\natexlab{b}}.

\bibitem[Lu et~al.(2022)Lu, Lemay, Chang, H{\"o}bel, and Kalpathy-Cramer]{lu2022fair}
C.~Lu, A.~Lemay, K.~Chang, K.~H{\"o}bel, and J.~Kalpathy-Cramer.
\newblock Fair conformal predictors for applications in medical imaging.
\newblock In \emph{Proceedings of the AAAI Conference on Artificial Intelligence}, volume~36, pages 12008--12016, 2022.

\bibitem[Mao et~al.(2024)Mao, Martin, and Reich]{Mao_2022}
H.~Mao, R.~Martin, and B.~J. Reich.
\newblock Valid model-free spatial prediction.
\newblock \emph{Journal of the American Statistical Association}, 119\penalty0 (546):\penalty0 904--914, 2024.

\bibitem[Moosavi-Dezfooli et~al.(2017)Moosavi-Dezfooli, Fawzi, Fawzi, and Frossard]{moosavi2017universal}
S.-M. Moosavi-Dezfooli, A.~Fawzi, O.~Fawzi, and P.~Frossard.
\newblock Universal adversarial perturbations.
\newblock In \emph{Proceedings of the IEEE conference on computer vision and pattern recognition}, pages 1765--1773, 2017.

\bibitem[Papadopoulos et~al.(2002)Papadopoulos, Proedrou, Vovk, and Gammerman]{papadopoulos2002inductive}
H.~Papadopoulos, K.~Proedrou, V.~Vovk, and A.~Gammerman.
\newblock Inductive confidence machines for regression.
\newblock In \emph{Machine learning: ECML 2002: 13th European conference on machine learning Helsinki, Finland, August 19--23, 2002 proceedings 13}, pages 345--356. Springer, 2002.

\bibitem[Peyr{\'e} et~al.(2019)Peyr{\'e}, Cuturi, et~al.]{peyre2019computational}
G.~Peyr{\'e}, M.~Cuturi, et~al.
\newblock Computational optimal transport: With applications to data science.
\newblock \emph{Foundations and Trends{\textregistered} in Machine Learning}, 11\penalty0 (5-6):\penalty0 355--607, 2019.

\bibitem[Podkopaev and Ramdas(2021)]{Podkopaev_2021}
A.~Podkopaev and A.~Ramdas.
\newblock Distribution-free uncertainty quantification for classification under label shift.
\newblock In \emph{Uncertainty in artificial intelligence}, pages 844--853. PMLR, 2021.

\bibitem[Qiu et~al.(2023)Qiu, Dobriban, and Tchetgen~Tchetgen]{qiu2023prediction}
H.~Qiu, E.~Dobriban, and E.~Tchetgen~Tchetgen.
\newblock Prediction sets adaptive to unknown covariate shift.
\newblock \emph{Journal of the Royal Statistical Society Series B: Statistical Methodology}, 85\penalty0 (5):\penalty0 1680--1705, 2023.

\bibitem[Rahmani et~al.(2023)Rahmani, Thapa, Tsou, Chetty, Barnes, Lam, and Tso]{rahmani2023assessing}
K.~Rahmani, R.~Thapa, P.~Tsou, S.~C. Chetty, G.~Barnes, C.~Lam, and C.~F. Tso.
\newblock Assessing the effects of data drift on the performance of machine learning models used in clinical sepsis prediction.
\newblock \emph{International Journal of Medical Informatics}, 173:\penalty0 104930, 2023.

\bibitem[Tibshirani et~al.(2019)Tibshirani, Foygel~Barber, Candes, and Ramdas]{tibshirani2019conformal}
R.~J. Tibshirani, R.~Foygel~Barber, E.~Candes, and A.~Ramdas.
\newblock Conformal prediction under covariate shift.
\newblock \emph{Advances in neural information processing systems}, 32, 2019.

\bibitem[Vazquez and Facelli(2022)]{vazquez2022conformal}
J.~Vazquez and J.~C. Facelli.
\newblock Conformal prediction in clinical medical sciences.
\newblock \emph{Journal of Healthcare Informatics Research}, 6\penalty0 (3):\penalty0 241--252, 2022.

\bibitem[Villani et~al.(2009)]{villani2009optimal}
C.~Villani et~al.
\newblock \emph{Optimal transport: old and new}, volume 338.
\newblock Springer, 2009.

\bibitem[Vovk et~al.(2005)Vovk, Gammerman, and Shafer]{Vovk_2005}
V.~Vovk, A.~Gammerman, and G.~Shafer.
\newblock \emph{Algorithmic learning in a random world}, volume~29.
\newblock Springer, 2005.

\bibitem[Wisniewski et~al.(2020)Wisniewski, Lindsay, and Lindsay]{wisniewski2020application}
W.~Wisniewski, D.~Lindsay, and S.~Lindsay.
\newblock Application of conformal prediction interval estimations to market makers’ net positions.
\newblock In \emph{Conformal and probabilistic prediction and applications}, pages 285--301. PMLR, 2020.

\bibitem[Xu and Xie(2021)]{Xu_2021}
C.~Xu and Y.~Xie.
\newblock Conformal prediction interval for dynamic time-series.
\newblock In \emph{International Conference on Machine Learning}, pages 11559--11569. PMLR, 2021.

\bibitem[Xu and Xie(2023)]{Xu_2023}
C.~Xu and Y.~Xie.
\newblock Sequential predictive conformal inference for time series.
\newblock In \emph{International Conference on Machine Learning}, pages 38707--38727. PMLR, 2023.

\bibitem[Xu et~al.(2025)Xu, Chen, Sun, Venkitasubramaniam, and Xie]{xu2025wassersteinregularized}
R.~Xu, C.~Chen, Y.~Sun, P.~Venkitasubramaniam, and S.~Xie.
\newblock Wasserstein-regularized conformal prediction under general distribution shift.
\newblock \emph{arXiv preprint arXiv:2501.13430}, 2025.

\bibitem[Yan et~al.(2024)Yan, Romano, and Weng]{Yan_2024}
G.~Yan, Y.~Romano, and T.-W. Weng.
\newblock Provably robust conformal prediction with improved efficiency.
\newblock \emph{arXiv preprint arXiv:2404.19651}, 2024.

\bibitem[Yang et~al.(2024)Yang, Kuchibhotla, and Tchetgen~Tchetgen]{yang2024doubly}
Y.~Yang, A.~K. Kuchibhotla, and E.~Tchetgen~Tchetgen.
\newblock Doubly robust calibration of prediction sets under covariate shift.
\newblock \emph{Journal of the Royal Statistical Society Series B: Statistical Methodology}, 86\penalty0 (4):\penalty0 943--965, 2024.

\bibitem[Yin et~al.(2024)Yin, Shi, Wang, and Blei]{yin2024conformal}
M.~Yin, C.~Shi, Y.~Wang, and D.~M. Blei.
\newblock Conformal sensitivity analysis for individual treatment effects.
\newblock \emph{Journal of the American Statistical Association}, 119\penalty0 (545):\penalty0 122--135, 2024.

\bibitem[Zaffran et~al.(2022)Zaffran, F{\'e}ron, Goude, Josse, and Dieuleveut]{Zaffran_2022}
M.~Zaffran, O.~F{\'e}ron, Y.~Goude, J.~Josse, and A.~Dieuleveut.
\newblock Adaptive conformal predictions for time series.
\newblock In \emph{International Conference on Machine Learning}, pages 25834--25866. PMLR, 2022.

\bibitem[Zargarbashi et~al.(2024)Zargarbashi, Akhondzadeh, and Bojchevski]{Zargarbashi_2024}
S.~H. Zargarbashi, M.~S. Akhondzadeh, and A.~Bojchevski.
\newblock Robust yet efficient conformal prediction sets.
\newblock \emph{arXiv preprint arXiv:2407.09165}, 2024.

\end{thebibliography}

\appendix

\section{Estimation of the LP ambiguity set parameters $\varepsilon$ and $\rho$}
\label{sec:estimation:rho:vareps}

While our theoretical results apply to any pair $(\varepsilon, \rho)$ defining an LP ambiguity set, selecting these parameters in practice is critical to balancing robustness and informativeness. This is particularly important when only a limited number of calibration and test samples are available. To address this, we propose a systematic estimation procedure for $(\varepsilon, \rho)$ based on empirical data. The key idea is to identify the pair that yields the tightest worst-case conformal prediction set while preserving the desired coverage under distribution shift.

The procedure works as follows. Given two independent batches of calibration scores from the training distribution $\mathbb{P}$ and a batch of test scores from the shifted distribution $\mathbb{Q}$, we evaluate a grid of candidate $\varepsilon$ values. For each candidate $\varepsilon_i$, we estimate the corresponding $\rho_i$ by computing the LP distance between one batch of calibration scores and the test scores using one-dimensional optimal transport with cost function $\mathds{1}\{|x - y| \geq \varepsilon_i\}$. This transport problem can be efficiently solved either via the Sinkhorn algorithm or using the standard linear programming formulation, both of which are efficient in one dimension due to the sorted structure of empirical distributions \cite{peyre2019computational}. The resulting pair $(\varepsilon_i, \rho_i)$ defines a valid ambiguity set, and we compute its associated worst-case quantile using the second calibration batch. Specifically, we apply Corollary~\ref{cor:robust:CP}, setting $\beta_i = \alpha + (\alpha - \rho_i - 2)/n$, so that the prediction set $C_{\varepsilon_i, \rho_i}^{1 - \beta_i}$ enjoys a worst-case coverage guarantee of at least $1 - \alpha$. We then select the pair that yields the smallest such quantile. To preserve statistical validity, the calibration scores used to estimate $(\varepsilon, \rho)$ must be disjoint from those used to compute the conformal quantile. This ensures that the ambiguity set is selected independently of the scores used for calibration, avoiding overfitting and maintaining the coverage guarantee. We present this procedure in Algorithm~\ref{alg:est:rho:vareps}.

\begin{algorithm}[t]
\caption{Estimation of $\varepsilon$ and $\rho$}
\label{alg:est:rho:vareps}
\begin{algorithmic}[1]
\Require Independent empirical calibration score distributions $\widehat{\mathbb P}_n^{(1)}, \widehat{\mathbb P}_n^{(2)}$ and empirical test score distribution $\widehat{\mathbb Q}_m$; a grid of $\{\varepsilon_i\}_{i=1}^k$ values, with $k \in \mathbb N$; and target coverage $1 - \alpha$.
\Ensure Pair $(\varepsilon_i, \rho_i)$ yielding the tightest prediction set with valid coverage.
\For{$i = 1,\dots, k$}
    \State Compute the one-dimensional LP distance $\rho_i := \text{LP}_\varepsilon \left(\widehat{\mathbb P}_n^{(1)}, \widehat{\mathbb Q}_m \right)$
    \State Set $\beta_i := \alpha + (\alpha - \rho_i - 2)/{n}$
    \State Compute worst-case quantile $q_i := \text{Quant}^{\text{WC}}_{\varepsilon_i, \rho_i}\left(1-\beta_i; \widehat{\mathbb P}_n^{(2)}\right) = \text{Quant}\left(1-\beta_i+\rho_i; \widehat{\mathbb P}_n^{(2)}\right) + \varepsilon_i$
\EndFor
\State \Return $(\varepsilon_i, \rho_i)$ with minimal $q_i$
\Comment{Smaller $q_i$ leads to smaller robust prediction sets}
\end{algorithmic}
\end{algorithm}

\begin{figure}[t]
\centering
\includegraphics[width=\linewidth]{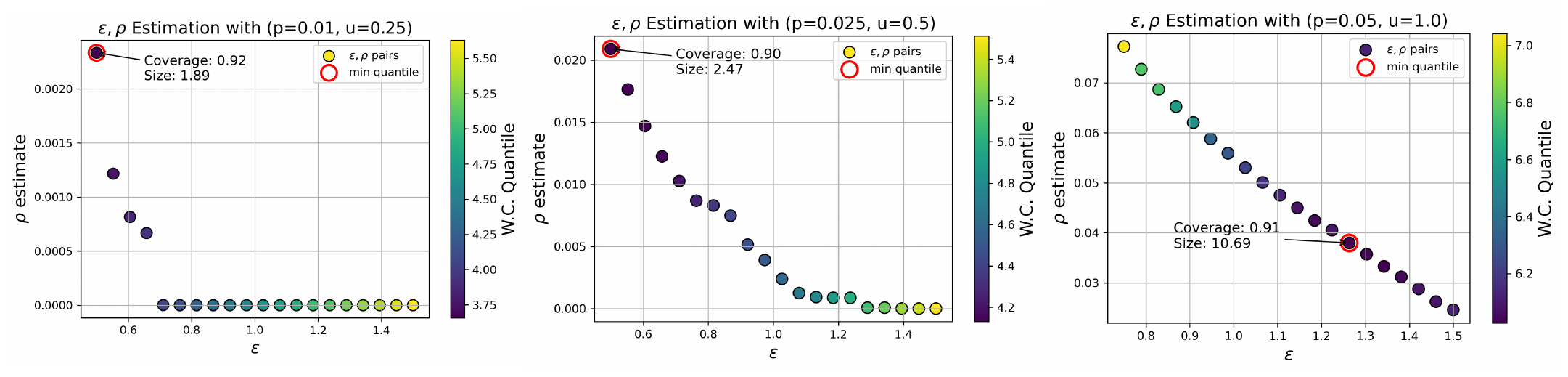}
\vspace{-1cm}
\caption{\textbf{ImageNet $(\varepsilon, \rho)$ estimation}. Each point in the 20-point grid corresponds to a candidate $(\varepsilon, \rho)$ pair, where $\varepsilon \in (0.5, 1.5)$ and $\rho$ is estimated using one-dimensional optimal transport between the empirical calibration and test score distributions, each constructed from 1000 samples. The color scale represents the empirical worst-case quantile associated with each pair, computed on a held-out calibration batch. The optimal $(\varepsilon, \rho)$ pair, yielding the smallest quantile, is highlighted in red, with the corresponding empirical coverage and prediction set size annotated. The true corruption parameters $(p, u)$ used to generate the test distribution are also indicated for reference.
}
\label{fig:imgnet_grid_est}
\end{figure}

\begin{figure}[t]
\centering
\includegraphics[width=\linewidth]{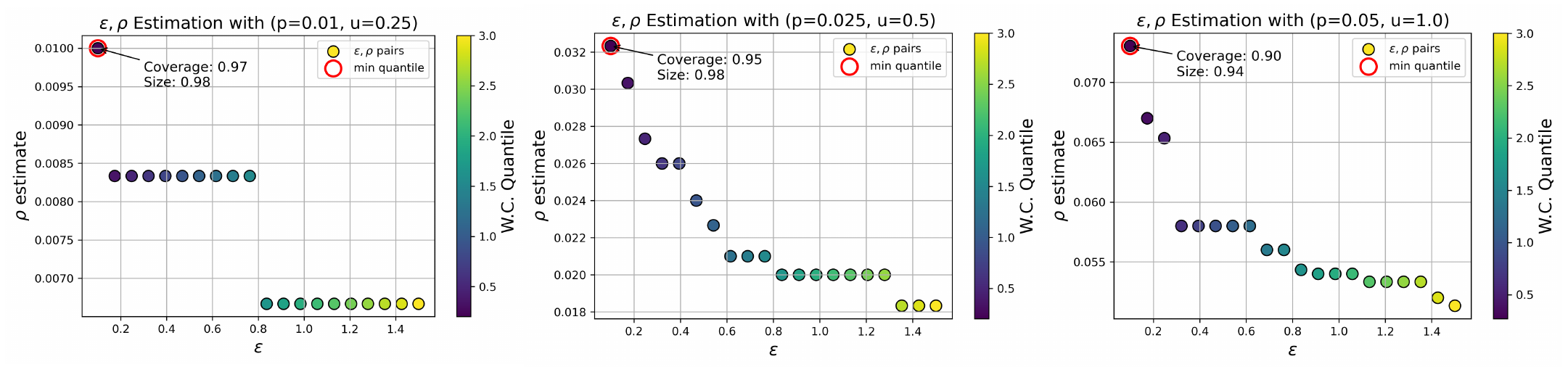}
\vspace{-1cm}
\caption{\textbf{MNIST $(\varepsilon, \rho)$ estimation}. Each point in the 20-point grid corresponds to a candidate $(\varepsilon, \rho)$ pair, where $\varepsilon \in (0.1, 1.5)$ and $\rho$ is estimated using one-dimensional optimal transport between the empirical calibration and test score distributions, each constructed from 1000 samples. The color scale represents the empirical worst-case quantile associated with each pair, computed on a held-out calibration batch. The optimal $(\varepsilon, \rho)$ pair, yielding the smallest quantile, is highlighted in red, with the corresponding empirical coverage and prediction set size annotated. The true corruption parameters $(p, u)$ used to generate the test distribution are also indicated for reference.
}
\label{fig:mnist_grid_est}
\end{figure}

Empirical results on ImageNet and MNIST validate the effectiveness of this approach. Figures~\ref{fig:imgnet_grid_est} and~\ref{fig:mnist_grid_est} display the estimated $(\varepsilon, \rho)$ values over a $20$-point grid, visualizing the resulting worst-case quantiles through color shading. The selected pair (highlighted in red) yields the smallest worst-case quantile and corresponds to the tightest robust prediction set. Across both datasets, we observe that the data-driven procedure reliably identifies ambiguity set parameters that balance coverage and informativeness, leading to prediction sets that respect the desired $1-\alpha$ coverage level.

\begin{remark}[Sensitivity to $\varepsilon$ and $\rho$]
It is natural to ask how sensitive the method is to misspecification of the shift parameters $(\varepsilon, \rho)$. While both influence the prediction set, their effects are asymmetric. The parameter $\varepsilon$ appears additively in the worst-case quantile and controls the width of the prediction set without affecting coverage. In contrast, $\rho$ shifts the quantile level and also appears subtractively in the coverage bound from Theorem~\ref{thm:robust:CP}. As a result, even small underestimations of $\rho$ can significantly impact coverage, whereas modest underestimations of $\varepsilon$ tend to reduce the prediction set size only slightly. In both Figures~\ref{fig:imgnet_grid_est} and~\ref{fig:mnist_grid_est}, we observe a trade-off: smaller $\varepsilon$ values are typically associated with larger $\rho$ estimates, and vice versa. Selecting the pair that minimizes the worst-case quantile provides a principled way to balance robustness and efficiency without being overly conservative.
\end{remark}

\begin{remark}[Use of test samples for shift estimation]
The estimation procedure outlined in this section requires access to test samples in order to estimate the distribution shift. While this may initially seem restrictive, we emphasize that only a relatively small number of test samples is needed to ensure stable estimates of $(\varepsilon, \rho)$ in practice. In our experiments, as few as 500--1000 calibration and test scores are sufficient to obtain consistent estimates across multiple runs. Nonetheless, one might ask: if test samples are available, why not apply conformal prediction directly to them instead of using a distributionally robust approach? In many applications, this is indeed preferable, as standard conformal methods yield valid coverage guarantees under the exchangeability assumption. The purpose of this section, however, is not to recommend distributionally robust conformal prediction over standard conformal prediction in the presence of test data. Rather, it is to demonstrate LP-based ambiguity sets as a principled model for capturing both local and global distribution shifts. Estimating these parameters from data allows us to instantiate the LP ambiguity set in a concrete, data-driven way.
\end{remark}

\section{Experimental Setup}
\label{sec:experimental:setup}

All experiments were conducted on a single Nvidia A100 GPU with 40GB of RAM. We strictly follow the official GitHub implementations provided by the authors of the referenced methods, except for weighted conformal prediction~\cite{tibshirani2019conformal}, for which we implemented a neural network–compatible version based strictly on the algorithm described in~\cite{tibshirani2019conformal}.

For a given level $\alpha$ and $n$ calibration data points, the prediction sets for each algorithm are constructed from the following quantiles:
\begin{enumerate}
    \item \emph{Standard Conformal Prediction: } \begin{align*}
    \qt\left({\lceil(n+1)(1-\alpha)\rceil}/{n}; \widehat{\mathbb{P}}_n\right)
    \end{align*}
    
    \item \emph{Our method---LP Robust Conformal Prediction (following Corollary \ref{cor:robust:CP}): }
    \begin{align*}
    \mathrm{Quant}\Bigl(1-\beta+\rho; \widehat{\mathbb{P}}_n\Bigr) + \varepsilon,\qquad \beta = \alpha + (\alpha-\rho-2)/n
    \end{align*}
    
    \item \emph{$\chi^2$ Robust Conformal Prediction~\cite{cauchois2024robust}: }
    \begin{align*}
    &\qt\left( g_{f,\rho}^{-1} (1-\alpha_n); \widehat{\mathbb{P}}_n \right), \ \alpha_n = g_{f,\rho} \left( \left(1 + {1}/{n} \right) g_{f,\rho}^{-1} (1 - \alpha) \right), 
    \end{align*}
    where $\rho$ is the radius of the ambiguity set, $f(x) = (x-1)^2$, and $g_{f,\rho}$ and $g_{f,\rho}^{-1}$ are defined as:
    \begin{align*}
        g_{f,\rho}(\beta) &:= \inf \left\{ z \in [0,1] : \beta f \left( \frac{z}{\beta} \right) + (1-\beta) f \left( \frac{1-z}{1-\beta} \right) \leq \rho \right\},~ \beta\in[0, 1],\\
        g_{f,\rho}^{-1}(\tau) &= \sup \left\{ \beta \in [0,1] : g_{f,\rho}(\beta) \leq \tau \right\}, ~\tau\in[0, 1].
    \end{align*}
    The radius $\rho$ is estimated using the slab estimation procedure described in~\cite{cauchois2024robust}. 

    \item \emph{Conformal Prediction under Covariate Shift~\cite{tibshirani2019conformal}: }
    \begin{align*}
        \qt\left(1-\alpha; \widehat{\mathbb{P}}^{\,\omega}_{n+1}\right),\qquad \widehat{\mathbb{P}}^{\,\omega}_{n+1} := \sum_{i=1}^n p^\omega_i(x)\delta_{s(X_i,Y_i)}+p^\omega_{n+1}(x)\delta_{\infty}
    \end{align*}
    where the weights are defined by 
    \begin{align*}
        p^\omega_i(x) = \frac{\omega(X_i)}{\sum_{j=1}^n \omega(X_j)+\omega(x)}, \;i = 1,\dots,n, \qquad p^\omega_{n+1}(x) = \frac{\omega(x)}{\sum_{j=1}^n \omega(X_j)+\omega(x)}.
    \end{align*}
    Here, $\omega(X) = \mathrm{d} \mathbb P_{\mathrm{test}}(X)/\mathrm{d} \mathbb P_{\mathrm{calib}}(X)$ denotes the density ratio between the test and calibration distributions, estimated via a separately trained classifier.
    \item \emph{Randomly Smoothed Conformal Prediction~\cite{Gendler_2022}: } 
    \begin{align*}
       \qt\left((1-\alpha)(2+n)/(1+n); \widetilde{\mathbb P}_n\right)+\delta/\sigma,\qquad \widetilde{\mathbb P}_n:=\frac{1}{n}\sum_{i=1}^n\delta_{\tilde{s}(X_i,Y_i)}.
    \end{align*}
    Here, $\tilde{s}$ denotes the smoothed nonconformity score based on an existing score function $s$ \cite{Gendler_2022}, under which adversarial noise with $\|\epsilon\|_2 \leq \delta$ is propagated with distortion bounded by $\delta / \sigma$. 
    \item \emph{Fine-grained Conformal Prediction~\cite{ai2024not}: } 
    \begin{align*}
        \qt\left(g^{-1}_{f,\rho}(1-\alpha), \widehat{\mathbb{P}}^{\,\omega}_{n+1}\right)
    \end{align*}
    where $g^{-1}_{f,\rho}$ and $\widehat{\mathbb{P}}^{\,\omega}_{n+1}$ are defined above. For the $f$-divergence method, we estimate the robustness parameter $\rho$ using the slab estimation procedure described in~\cite{cauchois2024robust}. 
\end{enumerate}


\section{Proofs}
\label{sec:proofs}

\subsection{Proofs of Section~\ref{sec:LP:shifts}}

\begin{proof}[Proof of Proposition~\ref{prop:LP:decomposition}]
We start by proving the ``$\supseteq$'' direction. Let $\mathbb Q$ belong to the right-hand side in \eqref{eq:LP:decomposition}, and we want to prove that $\mathbb Q \in \mathbb{B}_{\varepsilon,\rho}(\mathbb P)$. From the right-hand side in \eqref{eq:LP:decomposition}, we know that there exists $\widetilde{\mathbb P}$ such that $\text{W}_\infty(\mathbb P, \widetilde{\mathbb P})\leq \varepsilon$ and $\text{TV}(\widetilde{\mathbb P}, \mathbb Q)\leq \rho$. Using the definition of the $\text{W}
_\infty$ distance in~\eqref{eq:W:infty}, we note that $\text{W}_\infty(\mathbb P, \widetilde{\mathbb P})\leq \varepsilon$ is equivalent to
\begin{align}
\label{lemma:LP:decomposition:1}
    \inf_{\gamma \in \Gamma(\mathbb P, \widetilde{\mathbb P})} \int_{\mathcal Z \times \mathcal Z} \mathds{1}\{\|z_1 - z_2\| > \varepsilon\} \mathrm{d} \gamma(z_1,z_2) \leq 0.
\end{align}

Now, since $\mathds{1}\{\|z_1 - z_2\| > \varepsilon\}$ is a lower semicontinuous function, by \cite[Theorem 4.1]{villani2009optimal} we know that there exists a coupling $\gamma_{12}^\star\in\Gamma(\mathbb P, \widetilde{\mathbb P})$ which attains the infimum in \eqref{lemma:LP:decomposition:1}. Analogously, since $\mathds{1}\{\|z_1 - z_2\| > 0\}$ is lower semicontinuous, the same result ensures that there exists a coupling $\gamma_{23}^\star\in\Gamma(\widetilde{\mathbb P},\mathbb Q)$ which attains the infimum in $\text{TV}(\widetilde{\mathbb P}, \mathbb Q)\leq \rho$. Since 
\begin{align*}
    (\pi_2)_\# \gamma_{12}^\star = (\pi_1)_\# \gamma_{23}^\star = \widetilde{\mathbb P},
\end{align*}
where $\pi_1: \mathcal Z_1 \times \mathcal Z_2 \to \mathcal Z_1$ and $\pi_2: \mathcal Z_1 \times \mathcal Z_2 \to \mathcal Z_2$ are the canonical projections, the Gluing lemma \cite[pp.\ 11--12]{villani2009optimal} guarantees that there exists a distribution $\gamma_{123} \in \mathcal P(\mathcal Z \times \mathcal Z \times \mathcal Z)$ such that $(\pi_{12})_\# \gamma_{123} = \gamma_{12}^\star$ and $(\pi_{23})_\# \gamma_{123} = \gamma_{23}^\star$. We now construct $\gamma_{13}:= (\pi_{13})_\# \gamma_{123}$, which can be easily shown to be a coupling of $\mathbb P$ and $\mathbb Q$. Then, we have that
\begin{align*}
    \int_{\mathcal Z \times \mathcal Z} \mathds{1}\{\|z_1 - z_3\| > \varepsilon\} &\mathrm{d} \gamma_{13}(z_1,z_3) = \int_{\mathcal Z \times \mathcal Z \times \mathcal Z} \mathds{1}\{\|z_1 - z_3\| > \varepsilon\} \mathrm{d} \gamma_{123}(z_1,z_2,z_3)
    \\& = \int_{\mathcal Z \times \mathcal Z \times \mathcal Z} \mathds{1}\{\|z_1 - z_2 + z_2 - z_3\| > \varepsilon\} \mathrm{d} \gamma_{123}(z_1,z_2,z_3)
    \\& \leq \int_{\mathcal Z \times \mathcal Z \times \mathcal Z} \mathds{1}\{\|z_1 - z_2\| + \|z_2 - z_3\| > \varepsilon\} \mathrm{d} \gamma_{123}(z_1,z_2,z_3)
    \\& \leq \int_{\mathcal Z \times \mathcal Z \times \mathcal Z} \left(\mathds{1}\{\|z_1 - z_2\|>\varepsilon\} + \mathds{1}\{\|z_2 - z_3\| > 0\}\right) \mathrm{d} \gamma_{123}(z_1,z_2,z_3)
    \\& = \int_{\mathcal Z \times \mathcal Z} \mathds{1}\{\|z_1 - z_2\|>\varepsilon\} \mathrm{d} \gamma_{12}^\star(z_1,z_2) + \int_{\mathcal Z \times \mathcal Z} \mathds{1}\{\|z_2 - z_3\|>0\} \mathrm{d} \gamma_{23}^\star(z_2,z_3)
    \\& \leq 0 + \rho = \rho,
\end{align*}
where the first inequality is a consequence of the triangle inequality, and the second inequality follows by noticing that the event $\{\|z_1 - z_2\| + \|z_2 - z_3\| > \varepsilon\}$ is contained in $\{\|z_1 - z_2\|>\varepsilon\} \cup \{\|z_2 - z_3\| > 0\}$. Therefore, 
\begin{align*}
    \inf_{\gamma\in\Gamma(\mathbb P,\mathbb Q)}\int_{\mathcal Z \times \mathcal Z} \mathds{1}\{\|z_1-z_3\|>\varepsilon\}\mathrm{d}\gamma(z_1,z_3) \leq \rho,
\end{align*}
showing that $\text{LP}_\varepsilon(\mathbb P, \mathbb Q) \leq \rho$, and therefore $\mathbb Q \in \mathbb{B}_{\varepsilon,\rho}(\mathbb P)$.

We now prove the ``$\subseteq$'' direction. Let $\mathbb Q \in \mathbb{B}_{\varepsilon,\rho}(\mathbb P)$. In what follows, we will construct a distribution $\widetilde{\mathbb P}$ such that $\text{W}_\infty(\mathbb P, \widetilde{\mathbb P})\leq \varepsilon$ and $\text{TV}(\widetilde{\mathbb P}, \mathbb Q)\leq \rho$, showing that $\mathbb Q$ belongs to the right-hand side in \eqref{eq:LP:decomposition}. Since $\mathds{1}\{\|z_1-z_2\|>\varepsilon\}$ is lower semicontinuous, again by \cite[Theorem 4.1]{villani2009optimal}, we know that there exists a coupling $\gamma^\star\in\Gamma(\mathbb P, \mathbb Q)$ which attains the infimum in $\text{LP}_\varepsilon(\mathbb P,\mathbb Q) \leq \rho$. Therefore, $\gamma^\star(\|z_1-z_2\|>\varepsilon) = \bar{\rho}$ and $\gamma^\star(\|z_1-z_2\|\leq\varepsilon) = 1-\bar{\rho}$, for some $\bar{\rho} \leq \rho$. We define the event $\mathcal A := \{\|z_1-z_2\|\leq\varepsilon\}$, and its complement $\mathcal A^c = \{\|z_1-z_2\|>\varepsilon\}$, and denote by $\gamma^\star|_{\mathcal A}$ and $\gamma^\star|_{\mathcal A^c}$ the restrictions of the distribution $\gamma^\star$ to $\mathcal A$ and $\mathcal A^c$, respectively. We now construct the distribution $\widetilde{\mathbb P}$ as follows
\begin{align*}
    \widetilde{\mathbb P} := (\pi_1)_\#\gamma^\star|_{\mathcal A^c} + (\pi_2)_\# \gamma^\star|_{\mathcal A}.
\end{align*}
note that ${\widetilde{\gamma}} = \gamma^*|_\mathcal{A} + (\text{Id} \times \text{Id})_\# \left((\pi_1)_\# \gamma^\star|_{\mathcal A^c}\right)$ is a coupling between $\mathbb P$ and $\widetilde{\mathbb P}$. Then, we immediately have that
\begin{align*}
    &\inf_{\gamma \in \Gamma(\mathbb P, \widetilde{\mathbb P})} \int_{\mathcal Z \times \mathcal Z} \mathds{1}\{\|z_1 - z_2\| > \varepsilon\} \mathrm{d} \gamma(z_1,z_2) \leq \int_{\mathcal Z \times \mathcal Z} \mathds{1}\{\|z_1 - z_2\| > \varepsilon\} \mathrm{d} \widetilde{\gamma}(z_1,z_2)
    \\& = \int_{\mathcal Z \times \mathcal Z} \mathds{1}\{\|z_1 - z_2\| > \varepsilon\} \mathrm{d} \gamma^*|_\mathcal{A}(z_1,z_2) + \int_{\mathcal Z \times \mathcal Z} \mathds{1}\{\|z_1 - z_2\| > \varepsilon\} \mathrm{d} (\text{Id} \times \text{Id})_\# \left((\pi_1)_\# \gamma^\star|_{\mathcal A^c}\right)(z_1,z_2),
\end{align*}
which is clearly equal to zero, showing that $\text{W}_\infty(\mathbb P, \widetilde{\mathbb P})\leq \varepsilon$. Moreover,
\begin{align*}
    \text{TV}(\widetilde{\mathbb P}, \mathbb Q) &= \text{TV}\Big( (\pi_1)_\#\gamma^\star|_{\mathcal A^c} + (\pi_2)_\# \gamma^\star|_{\mathcal A},(\pi_2)_\#\gamma^\star|_{\mathcal A^c} + (\pi_2)_\# \gamma^\star|_{\mathcal A}\Big)\\
    &=\inf_{\gamma\in\Gamma\big( (\pi_1)_\#\gamma^\star|_{\mathcal A^c} + (\pi_2)_\# \gamma^\star|_{\mathcal A}, ~(\pi_2)_\# \gamma^\star|_{\mathcal A} + (\pi_2)_\#\gamma^\star|_{\mathcal A^c}\big)}\int_{\mathcal Z \times \mathcal Z}\mathds{1}\{\|z_1 - z_2\| > 0\} \mathrm{d} \gamma(z_1,z_2)\\
    &\leq \inf_{\widehat{\gamma}\in\Gamma\big(\frac{1}{\bar{\rho}}(\pi_1)_\#\gamma^\star|_{\mathcal A^c}, ~\frac{1}{\bar{\rho}}(\pi_2)_\#\gamma^\star|_{\mathcal A^c}\big)}\int_{\mathcal Z \times \mathcal Z}\mathds{1}\{\|z_1 - z_2\| > 0\} \mathrm{d} \big(\bar{\rho}\,\widehat{\gamma}+(\text{Id} \times \text{Id})_\# \left((\pi_2)_\# \gamma^\star|_{\mathcal A}\right) \big)(z_1,z_2)\\
    &=\inf_{\widehat{\gamma}\in\Gamma\big(\frac{1}{\bar{\rho}}(\pi_1)_\#\gamma^\star|_{\mathcal A^c}, ~\frac{1}{\bar{\rho}}(\pi_2)_\#\gamma^\star|_{\mathcal A^c}\big)}\int_{\mathcal Z \times \mathcal Z}\mathds{1}\{\|z_1 - z_2\| > 0\} \mathrm{d}(\bar{\rho}\,\widehat{\gamma})(z_1,z_2)\\
    &=\bar{\mathbb \rho}\Bigg(\inf_{\widehat{\gamma}\in\Gamma\big(\frac{1}{\bar{\rho}}(\pi_1)_\#\gamma^\star|_{\mathcal A^c}, ~\frac{1}{\bar{\rho}}(\pi_2)_\#\gamma^\star|_{\mathcal A^c}\big)}\int_{\mathcal Z \times \mathcal Z}\mathds{1}\{\|z_1 - z_2\| > 0\} \mathrm{d}\widehat{\gamma}(z_1,z_2)\Bigg)\\
    &= \bar{\rho}.
\end{align*}

Here, the first inequality holds since $\bar{\rho}\, \widehat{\gamma} + (\text{Id} \times \text{Id})_\# \left((\pi_2)_\# \gamma^\star|_{\mathcal A}\right)$, with $\widehat{\gamma}\in\Gamma\big(\frac{1}{\bar{\rho}}(\pi_1)_\#\gamma^\star|_{\mathcal A^c}, ~\frac{1}{\bar{\rho}}(\pi_2)_\#\gamma^\star|_{\mathcal A^c}\big)$, is a coupling of $(\pi_1)_\#\gamma^\star|_{\mathcal A^c} + (\pi_2)_\# \gamma^\star|_{\mathcal A}$ and $(\pi_2)_\# \gamma^\star|_{\mathcal A} + (\pi_2)_\#\gamma^\star|_{\mathcal A^c}$. Moreover, the third equality follows from the fact that 
\begin{align*}
    \int_{\mathcal Z \times \mathcal Z} \mathds{1}\{\|z_1 - z_2\| > 0\} \mathrm{d} \left((\text{Id} \times \text{Id})_\# \left((\pi_2)_\# \gamma^\star|_{\mathcal A}\right) \right)(z_1,z_2) = 0. 
\end{align*}
Finally, the last equality follows from the fact that $\mathcal A^c = \{\|z_1-z_2\|>\varepsilon\}$. This shows that $\text{TV}(\widetilde{\mathbb P}, \mathbb Q) \leq \rho$, and concludes the proof.
\end{proof}

\begin{proof}[Proof of Corollary~\ref{cor:relationship:other:metrics}]
Assertion~(i) follows from \eqref{eq:LP:decomposition} by setting $\varepsilon$ to zero, resulting in $\widetilde{\mathbb P} = \mathbb P$. Moreover, assertion~(ii) follows from \eqref{eq:LP:decomposition} by setting $\rho = 0$, resulting in $\widetilde{\mathbb P} = \mathbb Q$.
\end{proof}

\begin{proof}[Proof of Proposition~\ref{prop:local:global}]
We first prove that any distribution $\mathbb Q \in \mathbb B_{\varepsilon,\rho}(\mathbb P)$ admits a random variable decomposition $Z_2$ as described in \eqref{eq:local:global}. Since $\mathds{1}\{\|z_1-z_2\|>\varepsilon\}$ is lower semicontinuous, by \cite[Theorem 4.1]{villani2009optimal} there exists a coupling $\gamma^\star\in\Gamma(\mathbb P, \mathbb Q)$ which attains the infimum in $\text{LP}_\varepsilon(\mathbb P,\mathbb Q) \leq \rho$. Furthermore, given $Z_1 \sim \mathbb P$, consider the conditional distribution $Z_2|Z_1\sim\gamma^*_{Z_1}$, and define the (random) event $\mathcal{A}_{Z_1} := \{\|z_2-Z_1\|\leq \epsilon\}$. Moreover, we denote by $\gamma^*_{Z_1}|_{\mathcal{A}_{Z_1}}$ the restriction of $\gamma^*_{Z_1}$ to the event $\mathcal{A}_{Z_1}$, and by $\overline{\gamma^*_{Z_1}|_{\mathcal{A}_{Z_1}}}$ its normalized version. Similarly, $\overline{\gamma^*_{Z_1}|_{\mathcal{A}_{Z_1}^c }}$ is the normalized version of the restriction to the complement $\mathcal{A}_{Z_1}^c$. We then construct the random variables $B$, $N$, and $C$ as follows:
\begin{equation}
\begin{aligned}
    B|Z_1&\sim\text{Bern}\left(\gamma^*_{Z_1} \left(\|z_2-Z_1\|>\varepsilon \right) \right),\\
    N|Z_1 &= \mathds{1}\{B=1\}\cdot 0+\mathds{1}\{B=0\}\cdot (Z'_2-Z_1)|Z_1, \text{ and}\\
    C|Z_1 &= \mathds{1}\{B=1\}\cdot Z''_2|Z_1+\mathds{1}\{B=0\}\cdot 0,\label{eq:decomposition:BNC}
\end{aligned}
\end{equation}
where $Z'_2|Z_1$ and $Z''_2|Z_1$ follow the probability distributions $\overline{\gamma^*_{Z_1}|_{\mathcal{A}_{Z_1}}}$ and $\overline{\gamma^*_{Z_1}|_{\mathcal{A}_{Z_1}^c}}$, respectively. Here $B$, $N$, $C$ are dependent with marginals satisfying the properties in the statement of the proposition. We now define $\widetilde{Z}_2 := (Z_1 + N)\mathds{1}\{B = 0\} + C \mathds{1}\{B = 1\}$, and aim to show that $Z_2\overset{d}{=}\widetilde{Z}_2$. Following the construction in \eqref{eq:decomposition:BNC}, conditioning $\widetilde{Z}_2$ on $Z_1$ yields
\begin{align*}
    \widetilde{Z}_2|Z_1& = (Z_1 + N|Z_1)\cdot\mathds{1}\{B|Z_1 = 0\}+C|Z_1\cdot\mathds{1}\{B|Z_1 = 1\}\\
    &=Z_2'|Z_1\cdot\mathds{1}\{B|Z_1 = 0\}+Z''_2|Z_1 \cdot \mathds{1}(B|Z_1=1).
\end{align*}
Now recall from \eqref{eq:decomposition:BNC} that the conditional random variable $B|Z_1$ follows a Bernoulli distribution with parameter $\gamma^*_{Z_1} \left(\|z_2-Z_1\|>\varepsilon \right) = \gamma^*_{Z_1} (\mathcal A_{Z_1}^c)$. Thus, the distribution of $\widetilde{Z}_2|Z_1$ becomes $\gamma^*_{Z_1} (\mathcal A_{Z_1}) \cdot \overline{\gamma^*_{Z_1}|_{\mathcal A_{Z_1}}} + \gamma^*_{Z_1} (\mathcal A_{Z_1}^c) \cdot \overline{\gamma^*_{Z_1}|_{\mathcal A_{Z_1}^c}}$. Moreover, since $\gamma^*_{Z_1} (\mathcal A_{Z_1}) \cdot \overline{\gamma^*_{Z_1}|_{\mathcal A_{Z_1}}} = \gamma^*_{Z_1}|_{\mathcal A_{Z_1}}$ and $\gamma^*_{Z_1} (\mathcal A_{Z_1}^c) \cdot \overline{\gamma^*_{Z_1}|_{\mathcal A_{Z_1}^c}} = \gamma^*_{Z_1}|_{\mathcal A_{Z_1}^c}$, we have that $ \widetilde{Z}_2|Z_1\sim \gamma^*_{Z_1}.$
Therefore, the distribution of $\widetilde{Z}_2$ is is equal to
\begin{equation*}
    \widetilde{Z}_2 = \mathbb E_{Z_1}[\widetilde{Z}_2|Z_1] \sim(\pi_2)_\#\gamma^* = \mathbb Q,
\end{equation*}
which concludes the proof of the first direction.

We now prove the converse: any random variable $Z_2$ of the form \eqref{eq:local:global} is distributed according to some distribution $\mathbb Q$ belonging to the LP ambiguity set $\mathbb B_{\varepsilon,\rho}(\mathbb P)$. To show this, we employ Proposition~\ref{prop:LP:decomposition}, which reduces the problem to showing that $\mathbb Q$ belongs to the union on the right-hand side in \eqref{eq:LP:decomposition}. We start by defining the random variable
\begin{equation*}
    Z_3 \coloneqq (Z_1+N)\mathds{1}\{B = 0\}+Z_1\mathds{1}\{B = 1\},
\end{equation*}
where $Z_1, N$, and $B$ are the same random variables as in the definition of $Z_2$ in \eqref{eq:local:global}. Let $\widetilde{\mathbb{P}}$ denote the distribution of $Z_3$. Then, the pair $(Z_1, Z_3)$ induces a coupling $\gamma_{13} \in \Gamma(\mathbb P, \widetilde{\mathbb P})$. By construction we have $\gamma_{13}(\|z_1-z_3\|>\varepsilon) = 0$, implying that
\begin{align*}
    \inf_{\gamma \in \Gamma(\mathbb P, \widetilde{\mathbb P})} \int_{\mathcal Z \times \mathcal Z} \mathds{1}\{\|z_1 - z_3\| > \varepsilon\} \mathrm{d} \gamma(z_1,z_3) \leq \int_{\mathcal Z \times \mathcal Z} \mathds{1}\{\|z_1 - z_3\| > \varepsilon\} \mathrm{d} \gamma_{13}(z_1,z_3)\leq 0.
\end{align*}
Using the definition of the $\text{W}
_\infty$ distance in~\eqref{eq:W:infty}, this is equivalent to $\text{W}_\infty(\mathbb P, \widetilde{\mathbb P})\leq \varepsilon$. Next, we verify that $\text{TV}(\widetilde{\mathbb P},\mathbb Q)\leq \rho$. Note that $(Z_3, Z_2)$ induces a coupling $\gamma_{32} \in \Gamma(\widetilde{\mathbb P}, \mathbb Q)$ satisfying
\begin{align*}
    \int_{\mathcal Z \times \mathcal Z} \mathds{1}\{\|z_3 - z_2\| > 0\} \mathrm{d} \gamma_{32}(z_3,z_2) &\leq 0\cdot \text{Prob}(B = 0) + 1\cdot\text{Prob}(B=1) \leq \rho,
\end{align*}
where the equality follows from $\|(Z_3-Z_2) | (B=0)\| = 0$ and the fact that the indicator function is bounded by $1$. Therefore, 
\begin{equation*}
    \text{TV}(\widetilde{\mathbb P}, \mathbb Q) := \inf_{\gamma \in \Gamma(\widetilde{\mathbb P}, \mathbb Q)} \int_{\mathcal Z \times \mathcal Z} \mathds{1}\{\|z_1 - z_2\| > 0\} \mathrm{d} \gamma(z_1,z_2)\leq \rho,
\end{equation*}
Putting everything together, we have that that $\mathbb Q\in \bigcup_{\widetilde{\mathbb P}:\, \text{W}_\infty(\mathbb P, \widetilde{\mathbb P})\leq \varepsilon} \left\{\mathbb Q\in P(\mathcal{Z}):\, \text{TV}(\widetilde{\mathbb P},\mathbb Q)\leq \rho\right\}$, which completes the proof.
\end{proof}

\begin{proof}[Proof of Proposition~\ref{prop:propagation:LP}]
Let $\mathbb Q \in \mathbb{B}_{\varepsilon,\rho}(\mathbb P)$. We will show that $s_\# \mathbb Q$ belongs to the LP ambiguity set $\mathbb{B}_{k \varepsilon, \rho}(s_\# \mathbb P)$.

\begin{align*}
    \text{LP}_{k\varepsilon}(s_\# \mathbb P, s_\# \mathbb Q) &=\inf_{\tilde\gamma\in\Gamma(s_\# \mathbb P, s_\# \mathbb Q)}\int_{\mathbb R \times \mathbb R}\mathds{1}\{|\tilde{z}_1-\tilde{z}_2|>k\varepsilon\}\mathrm{d}\tilde{\gamma}(\tilde{z}_1, \tilde{z}_2)\\
    &= \inf_{\tilde\gamma\in(s\times s)_\#\Gamma(\mathbb P, \mathbb Q)}\int_{\mathbb R \times \mathbb R} \mathds{1}\{|\tilde{z}_1-\tilde{z}_2|>k\varepsilon\}\mathrm{d}\tilde{\gamma}(\tilde{z}_1, \tilde{z}_2)\\
    &= \inf_{\gamma\in\Gamma(\mathbb P, \mathbb Q)}\int_{\mathbb R \times \mathbb R}\mathds{1}\{|\tilde{z}_1 - \tilde{z}_2|>k\varepsilon\}\mathrm{d}((s\times s)_\# \gamma)(\tilde{z}_1, \tilde{z}_2)\\
    &=\inf_{\gamma\in\Gamma(\mathbb P, \mathbb Q)}\int_{\mathcal Z \times \mathcal Z} \mathds{1}(|s(z_1)-s(z_2)|>k\varepsilon)\mathrm{d}\gamma(z_1,z_2)\\
    &\leq\inf_{\gamma\in\Gamma(\mathbb P, \mathbb Q)}\int_{\mathcal Z \times \mathcal Z}\mathds{1}(\|z_1-z_2\|>\varepsilon)\mathrm{d}\gamma(z_1,z_2) \\
    &= \text{LP}_{\varepsilon}(\mathbb P, \mathbb Q),
\end{align*}
where the second equality follows from the equality $\Gamma(s_\# \mathbb P, s_\# \mathbb Q) = (s\times s)_\#\Gamma(\mathbb P, \mathbb Q)$ (see \cite[Lemma~2]{Aolaritei_2023}), and the inequality follows from the fact that $s$ is $k$-Lipschitz, i.e., $|s(z_1)-s(z_2)| \leq k \|z_1-z_2\|$.
\end{proof}


\subsection{Proofs of Section~\ref{sec:worst:case}}

\begin{proof}[Proof of Proposition~\ref{prop:WC:Quant}]
We prove the proposition in two steps. First, we show that the right-hand side in \eqref{eq:WC:Quant} is an upper bound on the $\beta$-quantile of any distribution in $\mathbb{B}_{\varepsilon, \rho}(\mathbb P)$. Second, we prove that there exists a sequence of distributions $\mathbb Q
_n \in \mathbb{B}_{\varepsilon, \rho}(\mathbb P)$, whose $\beta$-quantiles converge to it. 

\medskip

\noindent\emph{Step 1.} We prove, by contradiction, that $\text{Quant}(\beta;\mathbb Q) \leq \text{Quant}(\beta+\rho; \mathbb P) + \varepsilon$, for all $\mathbb Q \in \mathbb{B}_{\varepsilon, \rho}(\mathbb P)$. Suppose there exists $\widetilde{\mathbb Q}$ satisfying 
\begin{align}
\label{prop:WC:Quant:1}
    \widetilde{\mathbb Q} \in \mathbb{B}_{\varepsilon, \rho}(\mathbb P) \quad \text{and} \quad \text{Quant}(\beta; \widetilde{\mathbb Q}) > \text{Quant}(\beta+\rho; \mathbb P) + \varepsilon.
\end{align} 
We will show that this leads to
\begin{align*}
    \text{LP}_\varepsilon(\mathbb P, \widetilde{\mathbb Q}) = \inf_{\gamma\in\Gamma(\mathbb P, \widetilde{\mathbb Q})}\int_{\R\times\R} \mathds{1}\{|z_1 - z_2| > \varepsilon\}\mathrm{d}\gamma(z_1,z_2) > \rho.
\end{align*}
To simplify notation, we define $a:=\text{Quant}(\beta+\rho; \mathbb P)$ and $b:=\text{Quant}(\beta+\rho; \mathbb P)+\varepsilon$. Following \eqref{prop:WC:Quant:1}, $b$ must satisfy $F_{\widetilde{\mathbb Q}}(b)< \beta$. Hence, there exists $\Delta >0$ such that
\begin{align*}
    F_{\mathbb P}(a)-F_{\widetilde{\mathbb Q}}(b) \geq \rho + \Delta.
\end{align*}
Now, for an arbitrary coupling $\gamma \in \Gamma(\mathbb P, \widetilde{\mathbb Q})$, we have 
\begin{align*}
    F_{\mathbb P}(a) - F_{\widetilde{\mathbb Q}}(b) &= \int_{-\infty}^{a}\int_{-\infty}^\infty \mathrm{d}\gamma(z_1,z_2) - \int_{-\infty}^\infty\int_{-\infty}^{b}d\gamma(z_1,z_2)\\
    &=\int_{-\infty}^{a}\int_{-\infty}^b d\gamma(z_1,z_2)+\int_{-\infty}^{a}\int_{b+}^\infty d\gamma(z_1,z_2)-\int_{-\infty}^a\int_{-\infty}^{b} \mathrm{d}\gamma(z_1,z_2)-\int_{a+}^\infty\int_{-\infty}^{b}d\gamma(z_1,z_2)\\
    &\leq\int_{-\infty}^{a}\int_{b+}^\infty \mathds{1}_{\{|z_1-z_2|> \varepsilon\}} \mathrm{d}\gamma(z_1,z_2)\\
    &\leq\int_{-\infty}^\infty\int_{-\infty}^\infty \mathds{1}_{\{|z_1-z_2|> \varepsilon\}} \mathrm{d}\gamma(z_1,z_2),
\end{align*}
Since the above holds for every $\gamma\in\Gamma(\mathbb P, \widetilde{\mathbb Q})$, we conclude that
\begin{align*}
    \inf_{\gamma\in\Gamma(\mathbb P, \widetilde{\mathbb Q}
    )}\int_{\R\times\R} \mathds{1}_{\{|z_1-z_2|> \varepsilon\}} \mathrm{d}\gamma(z_1,z_2) \geq  \rho+\Delta,
\end{align*}
which contradicts the fact that $\widetilde{\mathbb Q} \in \mathbb{B}_{\varepsilon, \rho}(\mathbb P)$. This proves that $\text{Quant}(\beta;\mathbb Q) \leq \text{Quant}(\beta+\rho; \mathbb P) + \varepsilon$, for all $\mathbb Q \in \mathbb{B}_{\varepsilon, \rho}(\mathbb P)$.

\medskip

\noindent\emph{Step 2.} We construct a sequence of distributions $\mathbb Q_n \in \mathbb{B}_{\varepsilon, \rho}(\mathbb P)$ satisfying, as $n \to \infty$,
\begin{align*}
    \text{Quant}(\beta; \mathbb Q_n) \to \text{Quant}(\beta+\rho; \mathbb P)+\varepsilon.
\end{align*}
We define the sequence of distributions $\mathbb Q_n$ through their cumulative distribution functions as
\begin{align}
\label{prop:WC:Quant:2}
\begin{split}
    F_{\mathbb Q_n}(q) = \begin{cases}
    F_{\mathbb P}(q-\varepsilon), & q < \text{Quant}\left(\beta-\frac{1}{n}; \mathbb P \right) +\varepsilon\\
    \beta-\frac{1}{n}, &\text{Quant} \left(\beta-\frac{1}{n}; \mathbb P \right) +\varepsilon \leq q < \text{Quant} \left(\beta-\frac{1}{n}+\rho; \mathbb P \right)+\varepsilon\\
    F_{\mathbb P}(q-\varepsilon), & q \geq \text{Quant} \left(\beta-\frac{1}{n}+\rho; \mathbb P \right)+\varepsilon.
    \end{cases}
\end{split}
\end{align}
To simplify notation, for the rest of the proof, we define $q^{(1)}_n := \text{Quant}(\beta-\frac{1}{n}; \mathbb P)+\varepsilon$ and $q^{(2)}_n := \text{Quant}(\beta-\frac{1}{n}+\rho; \mathbb P)+\varepsilon$. The intuition behind the construction of $\mathbb Q_n$ is as follows: first, $\mathbb{Q}_n$ is obtained by translating the distribution $\mathbb P$ to the right by $\varepsilon$, and then, the mass between $[q^{(1)}_n,q^{(2)}_n)$ is moved to the point $q^{(2)}_n$. We refer to the illustration on the left in Figure~\ref{fig:Quant_Cov} for a visualization of this intuition. From this construction, it is clear that the $\text{LP}_\varepsilon(\mathbb P, \mathbb{Q}_n)$ is bounded by
\begin{align*}
    F_{\mathbb{Q}_n} \left(q^{(2)}_n \right)-F_{\mathbb{Q}_n} \left(q^{(1)}_n \right) &= F_{\mathbb{Q}_n} \left(\text{Quant} \left(\beta-\frac{1}{n}+\rho; \mathbb P \right)+\varepsilon \right)-F_{\mathbb{Q}_n} \left(\text{Quant} \left(\beta-\frac{1}{n}; \mathbb P \right)+\varepsilon \right) 
    \\& = F_{\mathbb P}\left(\text{Quant} \left(\beta-\frac{1}{n}+\rho; \mathbb P \right) \right) - \left( \beta - \frac{1}{n} \right)
    = \rho,
\end{align*}
showing that the sequence $\mathbb Q_n$ belongs to the LP ambiguity set $\mathbb{B}_{\varepsilon, \rho}(\mathbb P)$. Finally, we prove that the sequence of $\beta$-quantiles of $\mathbb{Q}_n$ converges to $\text{Quant}(\beta+\rho; \mathbb P)+\varepsilon$ from below. From the construction in \eqref{prop:WC:Quant:2}, we know that the following two properties hold:
\begin{itemize}
    \item $F_{\mathbb{Q}_n}(q)  < \beta, \qquad\forall~q<q^{(2)}_n$;
    \item $F_{\mathbb{Q}_n}(q)  \geq \beta, \qquad\forall~q\geq q^{(2)}_n, ~~n\geq \frac{1}{\rho}$.
\end{itemize}
Combining these two inequalities, we have that $\text{Quant}(\beta; \mathbb{Q}_n) = q^{(2)}_n$, which admits a limit as $n$ goes to infinity:
\begin{align*}
    q^{(2)}_n = \text{Quant} \left(\beta-\frac{1}{n}+\rho;\mathbb P \right)+\varepsilon
    \stackrel{n\rightarrow\infty}{\longrightarrow} \text{Quant}(\beta+\rho; \mathbb P)+\epsilon
\end{align*}
where the convergence follows from the left-continuity of the quantile function, which follows from the right-continuity of the cumulative distribution function. This concludes the proof.
\end{proof}

\begin{proof}[Proof of Proposition~\ref{prop:WC:Cov}]
Similarly to Proposition~\ref{prop:WC:Quant}, we prove this in two steps. First, we show that the right-hand side in \eqref{eq:WC:Cov} is a lower bound on the coverage at $q$ of any distribution in $\mathbb{B}_{\varepsilon, \rho}(\mathbb P)$. Second, we prove that there exists a sequence of distributions $\mathbb Q
_n \in \mathbb{B}_{\varepsilon, \rho}(\mathbb P)$, whose coverage at $q$ converges to it. 

\medskip

\noindent\emph{Step 1.} We prove, by contradiction, that $F_{\mathbb Q}(q) \geq F_{\mathbb P}(q - \varepsilon) - \rho$, for all $\mathbb Q \in \mathbb{B}_{\varepsilon, \rho}(\mathbb P)$. Suppose there exists $\widetilde{\mathbb Q}$ satisfying 
\begin{align}
\label{prop:WC:Cov:1}
    \widetilde{\mathbb Q} \in \mathbb{B}_{\varepsilon, \rho}(\mathbb P) \quad \text{and} \quad F_{\widetilde{\mathbb Q}}(q) < F_{\mathbb P}(q - \varepsilon) - \rho.
\end{align} 
We will show that this leads to
\begin{align*}
    \text{LP}_\varepsilon(\mathbb P, \widetilde{\mathbb Q}) = \inf_{\gamma\in\Gamma(\mathbb P, \widetilde{\mathbb Q})}\int_{\R\times\R} \mathds{1}\{|z_1 - z_2| > \varepsilon\}\mathrm{d}\gamma(z_1,z_2) > \rho.
\end{align*}
From the inequality in \eqref{prop:WC:Cov:1}, we know that there exists $\Delta > 0$ such that
\begin{align*}
    F_{\widetilde{\mathbb Q}}(q) \leq F_{\mathbb P}(q - \varepsilon)-(\rho+\Delta).
\end{align*}
Meanwhile, for any coupling $\gamma\in\Gamma(\mathbb P, \widetilde{\mathbb Q})$, we have
\begin{align*}
    \rho+\Delta &\leq F_{\mathbb P}(q-\varepsilon) - F_{\widetilde{\mathbb Q}}(q)
     = \int_{-\infty}^{q-\varepsilon}\int_{-\infty}^\infty d\gamma(z_1,z_2) - \int_{-\infty}^\infty \int_{-\infty}^{q} d\gamma(z_1,z_2)\\
    &=\int_{-\infty}^{q-\varepsilon}\int_{-\infty}^{q} d\gamma(z_1,z_2)+\int_{-\infty}^{q-\varepsilon}\int_{q+}^\infty d\gamma(z_1,z_2)-\int_{-\infty}^{q-\varepsilon} \int_{-\infty}^{q} d\gamma(z_1,z_2)-\int_{(q-\varepsilon)+}^\infty \int_{-\infty}^{q} d\gamma(z_1,z_2)\\
    &\leq\int_{-\infty}^{q-\varepsilon}\int_{q+}^\infty \mathds{1}_{\{|z_1-z_2|> \varepsilon\}}~d\gamma(z_1,z_2)\\
    &\leq\int_{\mathbb{R}\times\mathbb{R}} \mathds{1}_{\{|z_1-z_2|> \varepsilon\}}~d\gamma(z_1,z_2).
\end{align*}
Taking an infimum over $\gamma\in\Gamma(\mathbb P, \widetilde{\mathbb Q})$, we obtain that the $\text{LP}_\varepsilon(\mathbb P, \widetilde{\mathbb Q})>\rho$, which contradicts the fact that $\widetilde{\mathbb Q} \in \mathbb{B}_{\varepsilon, \rho}(\mathbb P)$. This proves that $F_{\mathbb Q}(q) \geq F_{\mathbb P}(q - \varepsilon) - \rho$, for all $\mathbb Q \in \mathbb{B}_{\varepsilon, \rho}(\mathbb P)$.

\medskip

\noindent\emph{Step 2.} We construct a sequence of distributions $\mathbb Q_n \in \mathbb{B}_{\varepsilon, \rho}(\mathbb P)$ satisfying, as $n \to \infty$,
\begin{align*}
    F_{\mathbb Q_n}(q) \to F_{\mathbb P}(q - \varepsilon) - \rho.
\end{align*}
We define the sequence of distributions $\mathbb Q_n$ through their cumulative distribution functions as
\begin{equation*}
        F_{\mathbb Q_n}(\gamma) = \begin{cases}
        F_\mathbb{P}(\gamma-\varepsilon), & \gamma<\text{Quant}\left(F_\mathbb{P}(q-\varepsilon)-\rho+\frac{1}{n}; \mathbb P \right)+\varepsilon\\
        F_\mathbb{P}(q-\varepsilon)-\rho+\frac{1}{n}, & \text{Quant} \left(F_\mathbb{P}(q-\varepsilon)-\rho+\frac{1}{n};\mathbb P \right)+\varepsilon\leq \gamma <  \text{Quant} \left(F_\mathbb{P}(q-\varepsilon)+\frac{1}{n};\mathbb P \right)+\varepsilon\\
        F_\mathbb{P}(\gamma-\varepsilon), & \gamma \geq \text{Quant} \left(F_\mathbb{P}(q-\varepsilon)+\frac{1}{n};\mathbb P \right)+\varepsilon.
        \end{cases}
    \end{equation*}
To simplify notation, for the rest of the proof, we define $q^{(1)}_n = \text{Quant}(F_{\mathbb P}(q-\varepsilon)-\rho+\frac{1}{n}; \mathbb P)+\varepsilon$ and $q^{(2)}_n = \text{Quant}(F_{\mathbb P}(q-\varepsilon)+\frac{1}{n}; \mathbb P)+\varepsilon$.  The intuition behind the construction of $\mathbb Q_n$ is as follows: first, $\mathbb{Q}_n$ is obtained by translating the distribution $\mathbb P$ to the right by $\varepsilon$, and then, the mass between $[q^{(1)}_n,q^{(2)}_n)$ is moved to the point $q^{(2)}_n$. We refer to the illustration on the right in Figure~\ref{fig:Quant_Cov} for a visualization of this intuition. From this construction, it is clear that the $\text{LP}_\varepsilon(\mathbb P, \mathbb{Q}_n)$ is bounded by
\begin{align*}
    F_{\mathbb{Q}_n} \left(q^{(2)}_n \right)-F_{\mathbb{Q}_n} \left(q^{(1)}_n \right) &= F_{\mathbb P}\left(\text{Quant} \left(F_{\mathbb P}(q-\varepsilon)+\frac{1}{n}; \mathbb P \right) \right) - F_{\mathbb P} \left( \text{Quant} \left(F_{\mathbb P}(q-\varepsilon)-\rho+\frac{1}{n}; \mathbb P \right) \right)
    \\ &= F_{\mathbb P}(q-\varepsilon)+\frac{1}{n} - \left( F_{\mathbb P}(q-\varepsilon)-\rho+\frac{1}{n} \right) = \rho,
\end{align*}
showing that the sequence $\mathbb Q_n$ belongs to the LP ambiguity set $\mathbb{B}_{\varepsilon, \rho}(\mathbb P)$. Moreover, when $n\geq\frac{1}{\rho}$, we have that $q\in[q^{(1)}_n, q^{(2)}_n)$ holds, and therefore
\begin{align*}
    F_{\mathbb Q_n}(q) = F_{\mathbb P}(q-\varepsilon)-\rho+\frac{1}{n} \stackrel{n\rightarrow\infty}{\longrightarrow} F_{\mathbb P}(q-\varepsilon)-\rho.
\end{align*}
This concludes the proof.
\end{proof}


\subsection{Proofs of Section~\ref{sec:robust:conformal}}

\begin{proof}[Proof of Theorem~\ref{thm:robust:CP}]
By conditioning on $\{(X_i,Y_i)\}_{i=1}^n$, we obtain
\begin{align*}
    \text{Prob}\left\{Y_{n+1}\in C_{\varepsilon,\rho}(X_{n+1};\widehat{\mathbb P}_n)|\{(X_i,Y_i)\}_{i=1}^n\right\} & = F_{\mathbb P_{\text{test}}}\left(\text{Quant}_{\varepsilon,\rho}^{\text{WC}} \left(1 - \alpha;\widehat{\mathbb P}_n \right)\right)
    \\& = F_{\mathbb P_{\text{test}}}\left(\text{Quant} \left(1-\alpha +\rho; \widehat{\mathbb P}_n \right)+\varepsilon\right)
    \\& \geq F_{\mathbb P}\left(\text{Quant}\left(1-\alpha +\rho; \widehat{\mathbb P}_n\right)+\varepsilon -\varepsilon\right) -\rho\\
    &= F_{\mathbb P}\left(\text{Quant}\left(1-\alpha +\rho, \widehat{\mathbb P}_n\right)\right)-\rho,
\end{align*}
where the first equality follows from Definition~\ref{eq:WC:prediction:set}, the second equality follows from Proposition~\ref{prop:WC:Quant}, and the first inequality is a consequence of Proposition~\ref{prop:WC:Cov}. Now, taking the expectation with respect to $\{(X_i,Y_i)\}_{i=1}^n$, we obtain
\begin{align*}
    \text{Prob}\left\{Y_{n+1}\in C_{\varepsilon,\rho}^{1-\alpha}\left(X_{n+1};\widehat{\mathbb P}_n \right)\right\} \geq \mathbb E \left[ F_{\mathbb P}\left(\text{Quant}\left(1-\alpha +\rho, \widehat{\mathbb P}_n\right)\right) \right] -\rho \geq \frac{\lceil n(1-\alpha+\rho)\rceil}{n+1}-\rho,
\end{align*}
where the second inequality follows from the guarantee $\mathbb{E}\left[F_{\mathbb P}(\text{Quant}(\beta; \widehat{\mathbb P}_n))\right] \geq {\lceil n\beta\rceil}/{(n+1)}$ (see \cite[Lemma~D.3]{cauchois2024robust}). This concludes the proof.
\end{proof}

\begin{proof}[Proof of Corollary~\ref{cor:robust:CP}]
Note that ${\lceil n(1-\beta+\rho)\rceil}/{(n+1)} - \rho \geq 1-\alpha$ is guaranteed by $n(1-\beta+\rho) \geq (n+1)(1-\alpha+\rho) + 1$, which is further guaranteed by $\beta \leq \alpha + {(\alpha-\rho-2)}/{n}$. This concludes the proof.
\end{proof}

\end{document}